\documentclass{article}
\usepackage{enumitem}

\PassOptionsToPackage{numbers, compress}{natbib}

\usepackage[preprint]{neurips_2026}
\usepackage[utf8]{inputenc}
\usepackage[T1]{fontenc}
\usepackage{hyperref}
\usepackage{url}
\usepackage{booktabs}
\usepackage{multirow}
\usepackage{amsfonts}
\usepackage{amsmath}
\usepackage{amssymb}
\usepackage{amsthm}
\usepackage{bm}

\newtheorem{theorem}{Theorem}
\newtheorem{lemma}{Lemma}
\newtheorem{proposition}{Proposition}

\newtheorem{assumption}{Assumption}
\usepackage{nicefrac}
\usepackage{microtype}
\usepackage{xcolor}
\usepackage{graphicx}
\usepackage{wrapfig}
\usepackage{subcaption}
\usepackage{algorithm}
\usepackage{algpseudocode}
\usepackage{fancyvrb}

\newcommand{\methodname}{\texttt{CreFlow}}
\newcommand{\xhat}{\hat{\mathbf{x}}}

\newcommand{\actphi}{\phi}
\newcommand{\vold}{v_\theta^{\text{old}}}
\newcommand{\vcur}{v_\theta}
\newcommand{\Lnft}{\mathcal{L}_{\text{NFT}}}
\newcommand{\Lcr}{\mathcal{L}_{\text{CR}}}
\newcommand{\Lcore}{\mathcal{L}_{\text{CreFlow}}}
\newcommand{\odotm}{\,\odot\,}

\newcommand{\xbarpos}{\bar{\mathbf{x}}_{0}^{+}}

\title{CreFlow: Corrective Reflow for Sparse-Reward Embodied Video Diffusion RL}

\author{%
  Zhenyang Ni\textsuperscript{1} \And
  Yijiang Li\textsuperscript{2} \And
  Ruochen Jiao\textsuperscript{1} \And
  Simon Sinong Zhan\textsuperscript{1} \And
  Sipeng Chen\textsuperscript{1}
  \AND
  Zhenfei Yin\textsuperscript{3} \And
  Minshuo Chen\textsuperscript{1} \And
  Philip Torr\textsuperscript{3} \And
  Zhaoran Wang\textsuperscript{1} \And
  Qi Zhu\textsuperscript{1}
  \AND
  \normalfont
  \textsuperscript{1}Northwestern University \quad
  \textsuperscript{2}University of California, San Diego \quad
  \textsuperscript{3}University of Oxford
}

\begin{document}

\maketitle

\begin{abstract}
Video generation models trained on heterogeneous data with likelihood-surrogate objectives can produce visually plausible rollouts that violate physical constraints in embodied manipulation.
Although reinforcement-learning post-training offers a natural route to adapting VGMs, existing video-RL rewards often reduce each rollout to a low-level visual metric, whereas manipulation video evaluation requires logic-based verification of whether the rollout satisfies a compositional task specification.
To fill this gap, we introduce a compositional constraint-based reward model for post-training embodied video generation models, which automatically formulates task requirements as a composition of Linear Temporal Logic constraints, providing faithful rewards and localized error information in generated videos.
To achieve effective improvement in high-dimensional video generation using these reward signals, we further propose \methodname{}, a novel online RL framework with two key designs: i) a credit-aware NFT loss that confines the RL update to reward-relevant regions, preventing perturbations to unrelated regions during post-training; and ii) a corrective reflow loss that leverages within-group positive samples as an explicit estimate of the correction direction, stabilizing and accelerating training.
Experiments show that \methodname{} yields reward judgments better aligned with human and simulator success labels than existing methods and improves downstream execution success by 23.8 percentage points across eight bimanual manipulation tasks.
\end{abstract}

\section{Introduction}
\label{sec:intro}

A growing body of work has explored the integration of video generation models (VGMs) into robotic manipulation systems~\cite{unipi,lvp,dream2flow,bharadhwaj2024gen2act,vidar,tan2025anypos}.
Despite their impressive perceptual fidelity, videos synthesized by modern VGMs still suffer from physical implausibility in embodied manipulation.
This limitation stems from both data and objective mismatches: web-scale pretraining data include heterogeneous sources such as entertainment videos that may violate physical rules, while likelihood-surrogate objectives~\cite{vampo} treat pixels uniformly and favor global plausibility over task-critical robot-object interactions.
To mitigate these limitations, recent studies have used reinforcement learning to align general-purpose VGMs with specific reward targets, including visual quality~\cite{wang2026worldcompass}, scene-level 3D consistency~\cite{videogpa}, and action executability~\cite{wang2026eva}.

Despite these promising efforts, reward design for manipulation video RL remains substantially more demanding than generic video evaluation.
A generated video may appear globally coherent while still containing subtle pixel-level errors that will lead to inaccurate actions and compounding failures over time~\cite{vampo}; see Fig.~\ref{fig:teaser}. 
Targeting single aspects of video quality~\cite{wang2026worldcompass,wang2026eva}, existing video RL rewards are prone to missing such small but decisive violations, since the reward for evaluating manipulation videos is naturally compositional~\cite{emboalign}: multiple requirements must be satisfied simultaneously for task execution to succeed.
For example, in a put-object-into-cabinet-drawer task, the object must persist throughout the video, the robot motion must remain kinematically valid, and the object must be placed inside the target drawer with correct action causality (drawer should not open without a robot grasping it).
Moreover, even a faithful reward signal---whose verdict reflects downstream task success---is insufficient, as high-dimensional video rollouts are dominated by reward-irrelevant pixels and time intervals. Uniformly applying RL updates to the full video therefore wastes gradient on these areas, perturbing regions that are already plausible or unrelated to downstream execution.

\begin{figure}[t]
  \centering
  \includegraphics[width=\linewidth]{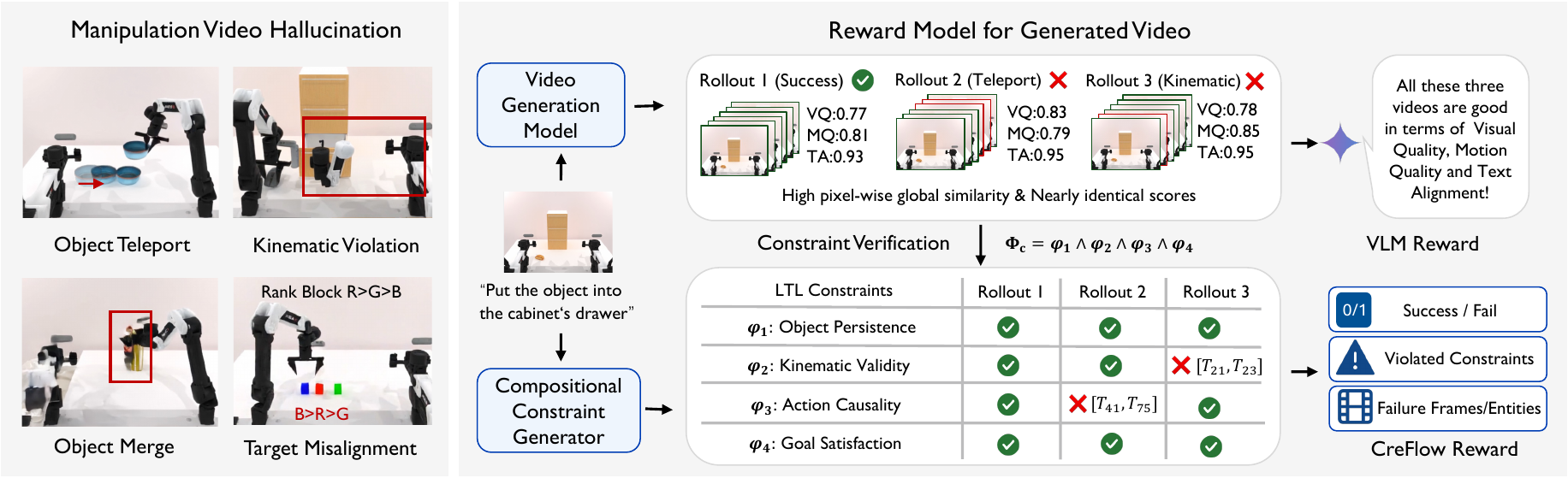}
  \caption{ Generated manipulation videos can be nearly indistinguishable under existing reward models~\cite{liu2025videoalign,wang2026worldcompass}, yet differ critically in downstream robot execution.
This ambiguity arises from two properties of manipulation videos: the decisive error may occupy only a small spatio-temporal region, and correctness depends on logic-based verification of compositional robot--object constraints rather than low-level visual quality alone. \methodname{} evaluates rollouts with compositional LTL constraints $\Phi_c$, producing a faithful binary success signal together with localized violation traces over constraints, frames, and entities, providing faithful and informative signal for RL post-training.}
  \vspace{-7mm}
  \label{fig:teaser}
\end{figure}

To fill this gap, we introduce a compositional constraint-based reward model for post-training embodied video generation models, which provides not only  faithful reward that evaluates downstream task success possibility, but also diagnostic signals that expose where and why a rollout fails.
To realize this, our core idea is to automatically formulate task success as a set of Linear Temporal Logic (LTL) constraints, where each constraint encodes an atomic requirement, and their composition jointly represents the holistic evaluation objective.
Each LTL constraint is evaluated over robot and object state traces, capturing the causal logic in the generated videos and providing spatio-temporal diagnostic information for locating the violation.
The constraint generation process is implemented by a coding agent, which leverages low-level vision modules to extract the states needed for constraint evaluation, including SAM3~\cite{sam3}, VLM~\cite{gemini}, and IDM~\cite{vidar,tan2025anypos}.

Following this strategy, we propose Corrective Reflow (\methodname{}), an online RL framework that converts compositional constraint violations into localized corrective objectives for manipulation video diffusion models.
\methodname{} introduces two key designs for turning sparse task failures into actionable supervision.
First, it aggregates the per-rollout violation traces in a group into a spatio-temporal mask over the responsible entities and frames, and applies a credit-aware NFT loss that confines the positive--negative velocity contrast to reward-relevant regions.
This prevents sparse task feedback from perturbing irrelevant pixels and keeps non-violating regions anchored to the pretrained video prior.
Second, for each failed rollout, \methodname{} aggregates the within-group successful rollouts generated under the same initial observation and instruction, and uses their  mean aggregation as an explicit corrective target rather than merely suppressing the generation probability of negative samples.
Together, these training objectives provide a more effective way to leverage both constraint-based reward feedback and relative information among rollouts generated under the same condition.

The contributions of this work are summarized as follows: i) We introduce a compositional constraint-based reward model that evaluates embodied video generation with automatically generated LTL constraints, providing not only faithful reward but also localized diagnostics for subtle rollout errors. ii)  We propose \methodname{}, a novel video RL framework that turns sparse compositional violations into a localized corrective estimator, combining a credit-aware NFT loss and a corrective reflow loss to deliver focused, low-variance updates on task-critical regions. iii) Extensive experiments on eight RoboTwin tasks show that \methodname{} yields reward judgments better aligned with human and simulator success labels than existing methods, leads to effective training to improve high dimensional video generation, and improves downstream execution success by 23.8 percentage points.
% \begin{itemize}[leftmargin=*]
% \item We introduce a compositional constraint-based reward model that evaluates embodied video generation with automatically generated LTL constraints, providing not only faithful reward but also localized diagnostics for subtle rollout errors.
% \item  We propose \methodname{}, a novel video RL framework that turns sparse compositional violations into a localized corrective estimator, combining a credit-aware NFT loss and a corrective reflow loss to deliver focused, low-variance updates on task-critical regions.
% \item  Extensive experiments on eight RoboTwin tasks show that \methodname{} improves execution success from 30.4\% to 53.1\%, outperforms DiffusionNFT by 10.2 percentage points, and yields rewards better aligned with human and simulator judgments.
% \end{itemize}

\section{Related Work}
\label{sec:related}

\textbf{Video Generation Model for Robot Manipulation.}
Video generative models (VGMs) have emerged as a promising foundation for robotic manipulation~\cite{unipi,ajay2023vlp,bharadhwaj2024gen2act,hu2024vpp,dream2flow,lvp,vidar,motus,dit4dit,dreamzero}.
Most methods follow a two-stage pipeline: generating a video plan from the task instruction and current observation, and then retargeting the predicted motion into robot actions via inverse dynamics models (IDMs).
Recent progress has improved both physically plausible video prediction~\cite{lvp,cosmos,chen2026abotphysworld} and general-purpose IDMs~\cite{vidar,tan2025anypos}.
However, a fundamental gap remains between generating globally coherent videos and producing robot actions that can be successfully executed in the physical world~\cite{vampo,wang2026eva}.
While EmboAlign~\cite{emboalign} uses compositional constraints to filter implausible videos before action retargeting, our work focuses on improving the VGM itself through constraint-guided post-training.

\textbf{Reward Modeling for Video Generation.}
Designing effective rewards for video generation is challenging because task-relevant failures may be sparse in space and time.
Existing video reward models either adapt image-level metrics such as CLIP and human preference scores~\cite{tagrpo,clip,hpsv3}, or learn preference rewards for visual quality, motion quality, and text alignment~\cite{liu2025videoalign}.
For physical applications, recent methods further evaluate specific properties such as 3D consistency~\cite{videogpa}, action smoothness and reachability~\cite{wang2026eva}, or visual distortions~\cite{chen2026abotphysworld}.
Nevertheless, these rewards remain insufficient for manipulation videos, where downstream success depends on compositional requirements and subtle robot-object interactions.
Inspired by constraint-based task specifications~\cite{emboalign,rekep,codeasmonitor,safebimanual}, we formulate manipulation success as LTL-style temporal compositions over continuous predicates, enabling faithful success evaluation and localized failure diagnosis.

\textbf{Reinforcement Learning and Preference Alignment for Video Generation.}
Reinforcement learning and preference alignment have become important tools for adapting generative models to objectives beyond likelihood-based pretraining.
Existing post-training methods for flow or diffusion VGMs largely inherit image-generation RL strategies, including policy-gradient training with stochastic samplers~\cite{dancegrpo,flowgrpo,mixgrpo2025} and reward-weighted denoising objectives such as Diffusion-NFT~\cite{diffusionnft}.
Recent video-oriented methods~\cite{tagrpo,sagegrpo} further show that directly applying image-level objectives to videos can underuse within-condition sample relationships and introduce unstable latent perturbations.
Manipulation video generation amplifies these challenges, since reliable dense rewards are unavailable, subtle pixel-level errors can determine downstream execution, and most pixels are irrelevant to the actual failure.
To address this, \methodname{} turns sparse-reward video diffusion RL into a localized corrective estimator: violation traces from the constraint monitor confine the reward-induced gradient to failure-relevant regions, and the corrective reflow loss replaces the symmetric reflection target used by negative-aware finetuning~\cite{diffusionnft} with a distributional anchor toward the within-condition positive marginal, estimated by the empirical mean of within-group successful rollouts.

\section{Problem Formulation and Preliminaries}
\label{sec:problem}

\paragraph{Video policy and objective.}
We study online reinforcement-learning post-training of a pretrained
flow-matching text-image-to-video (TI2V) model for robot manipulation.
Let $y=(c,\mathbf{I})\sim\mathcal{D}_{\rm task}$ denote a task context,
where $c$ is the language instruction and $\mathbf{I}$ is the first-frame
observation. Following the video-as-policy pipeline of Vidar~\cite{vidar},
a trainable video generator samples a rollout
$\mathbf{x}_0\sim p_\theta(\cdot\mid y)$, and a frozen inverse-dynamics
decoder $\mathcal{I}_{\rm IDM}$ maps the rollout to robot actions
$\mathbf{a}=\mathcal{I}_{\rm IDM}(\mathbf{x}_0)$. Only the video generator
parameters $\theta$ are updated during post-training.

Let $\mathrm{Succ}(\mathbf{a};y)\in\{0,1\}$ be the binary indicator of
successful execution in the target environment. The ideal objective is
\begin{equation}
    J_{\rm succ}(\theta)
    =
    \mathbb{E}_{y\sim\mathcal{D}_{\rm task}}\,
    \mathbb{E}_{\mathbf{x}_0\sim p_\theta(\cdot\mid y)}
    \left[
        \mathrm{Succ}\!\left(\mathcal{I}_{\rm IDM}(\mathbf{x}_0);y\right)
    \right].
    \label{eq:ideal_objective}
\end{equation}
Evaluating $\mathrm{Succ}$ requires executing decoded actions on a real robot
or high-fidelity simulator, so it is too expensive for the inner training
loop. During post-training, we instead score generated rollouts with the
compositional constraint introduced in Sec.~\ref{sec:reward}. 

\paragraph{DiffusionNFT surrogate.}
\methodname{} builds on the DiffusionNFT~\cite{diffusionnft} surrogate for online reinforcement learning of diffusion or flow models. In each online iteration, we sample rollouts from the behavior policy $p_{\theta_{\rm old}}(\cdot\mid y)$ and use $i$ to index samples in this rollout batch. For each $\mathbf{x}_0^{(i)}$, draw $t\sim\mathcal{U}[0,1]$ and $\boldsymbol{\epsilon}\sim\mathcal{N}(0,I)$, and use the rectified-flow interpolation
\begin{equation}
    \mathbf{x}_t^{(i)}
    =
    (1-t)\mathbf{x}_0^{(i)} + t\boldsymbol{\epsilon},
    \qquad
    \mathbf{v}^{(i)}
    =
    \boldsymbol{\epsilon}-\mathbf{x}_0^{(i)}.
    \label{eq:flow_interpolation}
\end{equation}
Let $v_\theta$ be the trainable velocity field and $v_{\rm old}=\operatorname{sg}(v_{\theta_{\rm old}})$ be the stop-gradient behavior velocity. DiffusionNFT defines positive and negative branches
\begin{equation}
    v_\theta^{+}
    =
    (1-\beta)v_{\rm old}+\beta v_\theta,
    \qquad
    v_\theta^{-}
    =
    (1+\beta)v_{\rm old}-\beta v_\theta,
    \label{eq:nft_branches}
\end{equation}
where $\beta>0$ controls the reinforcement strength. Its loss is
\begin{equation}
    \mathcal{L}_{\mathrm{NFT}}(\theta)
    =
    \mathbb{E}_{y,i,t,\boldsymbol{\epsilon}}
    \Bigl[
        r_i
        \,
        \bigl\|
            v_\theta^{+}(\mathbf{x}_t^{(i)},t,y)
            -
            \mathbf{v}^{(i)}
        \bigr\|_2^2
        +
        (1-r_i)
        \,
        \bigl\|
            v_\theta^{-}(\mathbf{x}_t^{(i)},t,y)
            -
            \mathbf{v}^{(i)}
        \bigr\|_2^2
    \Bigr],
    \label{eq:nft_loss}
\end{equation}
where $r_i\in\{0,1\}$ denotes the reward of rollout
$\mathbf{x}_0^{(i)}$. Intuitively, $\mathcal{L}_{\mathrm{NFT}}$ performs implicit preference learning: the $r_i=1$ branch reduces to a rejection-style~\cite{lee2023aligning} flow-matching update on successful rollouts, while the $r_i=0$ branch keeps $v_{\rm old}$ as an anchor and trains $v_\theta$ toward the reflected target of a failed rollout, thereby pushing the policy away from low-reward samples rather than simply discarding them.
%=========================================================
\section{Method}
\label{sec:method}
%=========================================================
This section presents \methodname{}, an online RL framework that converts compositional task violations into localized corrective updates for post-training manipulation VGMs; see Fig.~\ref{fig:method}. 
The key challenge is not only to obtain a faithful success signal, but also to make this sparse binary signal usable for optimizing high-dimensional video rollouts.  
\methodname{} first evaluates each rollout with a task-specific finite-trace LTL monitor (\S\ref{sec:reward}), which verifies a compositional task specification and returns both a binary success label and violation traces over constraints, entities, and frames. 
It then uses these traces to build two localized objectives: a group-shared credit-aware NFT loss (\S\ref{sec:mask}) that restricts negative-aware finetuning to reward-relevant spatio-temporal regions, and a corrective reflow loss (\S\ref{sec:aux}) that pulls failed rollouts toward the empirical mean of same-condition successful rollouts on their violated regions.

\begin{figure}[t]
  \centering
  \includegraphics[width=\linewidth]{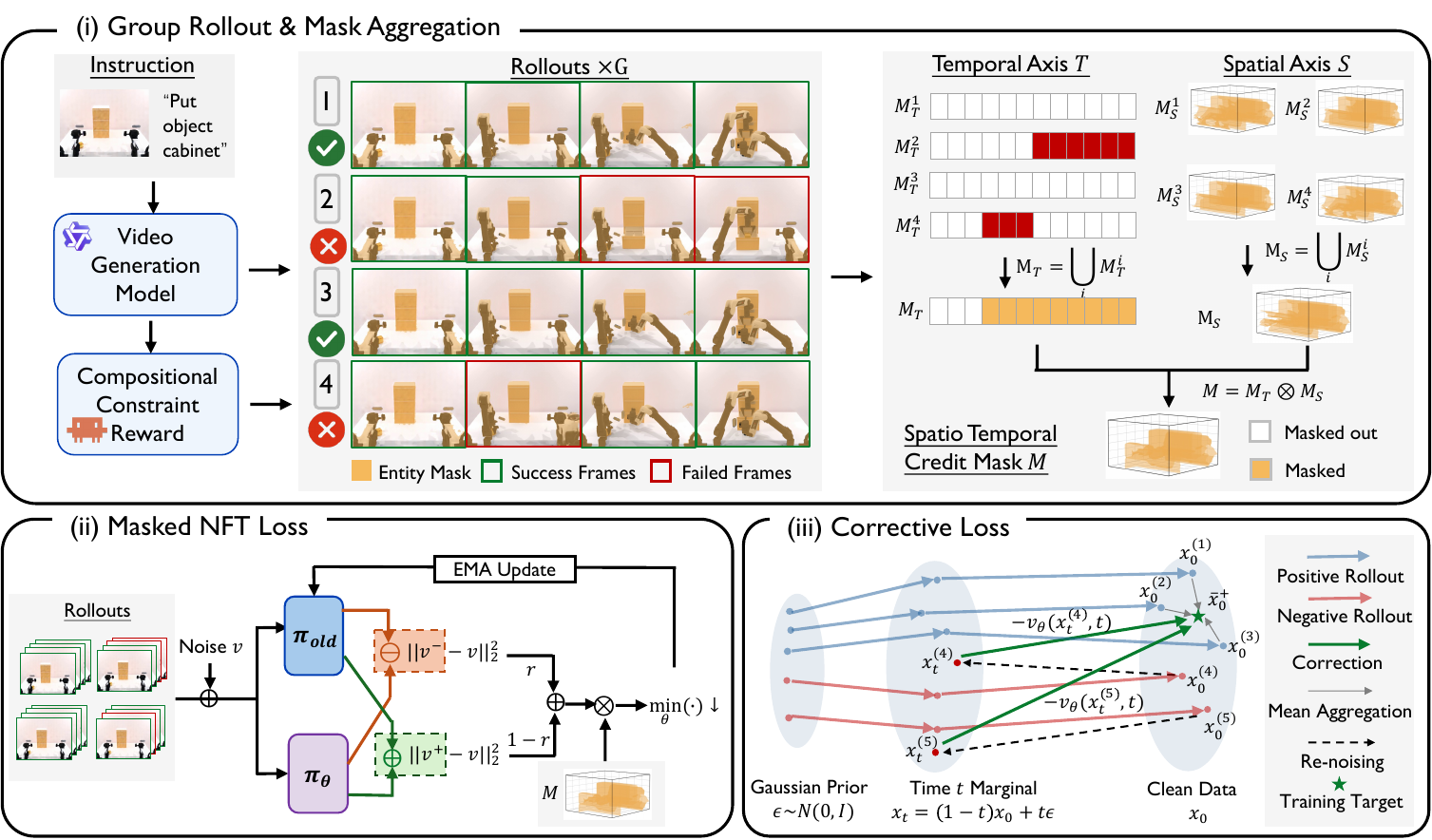}
  \caption{\textbf{Overview of \methodname{}.} 
    \methodname{} turns sparse compositional reward feedback into localized supervision for video diffusion post-training.
    Instead of applying reward-induced updates uniformly over the full rollout, it identifies the spatio-temporal mask responsible for task success or failure and restricts optimization to these regions.
    The resulting objective combines localized negative-aware finetuning with an explicit corrective target from same-condition successful rollouts, enabling stable improvement under sparse binary manipulation rewards.}
      \vspace{-3mm}
  \label{fig:method}
\end{figure}

\subsection{Compositional Constraint Monitor}
\label{sec:reward}
Existing video reward models for embodied generation either reduce a rollout to a single quality dimension~\cite{wang2026worldcompass,liu2025videoalign,tagrpo} or rely on a VLM-as-judge that subsamples a few frames and overlooks the small objects and short-window events on which manipulation success actually pivots~\cite{chen2026abotphysworld}. 
Manipulation success, by contrast, is compositional: several temporal and causal conditions must hold jointly throughout a rollout. Inspired by recent work that specifies embodied task goals and safety with Temporal Logic specification~\cite{sentinel2026,li2024embodied}, \methodname{} introduces a compositional constraint monitor $\mathcal{M}_c$ that lifts each rollout into a perception-grounded state trace, evaluates a conjunction of finite-trace Linear Temporal Logic clauses, and returns
\begin{equation}
    (r,\mathcal{V},\mathcal{S}) = \mathcal{M}_c(x_0;y),
    \label{eq:monitor}
\end{equation}
where $r\!\in\!\{0,1\}$ surrogates $\mathrm{Succ}$ in Eq.~\eqref{eq:ideal_objective}, $\mathcal{V}$ is the per-clause violation record, and $\mathcal{S}$ is the per-entity SAM3 mask atlas. Together $(\mathcal{V},\mathcal{S})$ encode \emph{when} and \emph{where} the rollout's behavior matters; how they are aggregated into a spatio-temporal supervision mask is deferred to \S\ref{sec:mask}, which exploits the group structure of our RL algorithm. We instantiate $\mathcal{M}_c$ in two stages: (i) offline specification synthesis which run once per task by a coding agent and frozen during RL; and (ii) per-rollout state lifting and clause evaluation.

\textbf{Specification synthesis.} Given the task context $y=(c,\mathbf{I})$, an offline coding agent synthesizes a triple $(\mathcal{E}_c,\mathcal{P}_c,\Phi_c)$: (i) a set of task-relevant entities $\mathcal{E}_c$ (task-related robot and objects); (ii) an entity-grounded predicate library $\mathcal{P}_c$ in which each predicate is already bound to concrete entities, e.g.\ $\textsc{Grasp}(\text{arm},\text{cup})$ or $\textsc{In}(\text{cup},\text{drawer})$; and (iii) an LTL$_f$ conjunction $\Phi_c=\bigwedge_{k=1}^{K}\varphi_k$ in which each clause $\varphi_k$ is instantiated from a \emph{closed template library} covering four families: \emph{persistence} $\mathbf{G}\,p$, \emph{terminal placement} $\mathbf{F}\mathbf{G}\,p$, \emph{causal coupling} $\mathbf{G}(p\!\rightarrow\!q)$, and \emph{ordering} $p\,\mathbf{U}\,q$. The closed template library makes the next stage's witness extraction template-driven rather than ad-hoc.

\textbf{State lifting and clause evaluation.} For each rollout $x_0$, we lift the video into a per-entity state trace $\sigma(x_0;y)=(s_1,\ldots,s_T)$ with $s_t=\{s_t^{e}\}_{e\in\mathcal{E}_c}$, where $s_t^{e}$ collects the attributes of entity $e$ at frame $t$ (segmentation mask, 2D position, gripper open/close, attribute flags). Three vision foundation models cooperate on this lifting: SAM3~\cite{sam3} for object masks and trajectories, IDM~\cite{vidar,tan2025anypos} for robot and gripper states, and a VLM~\cite{gemini} for attribute checks such as whether a drawer is open. As a by-product, we collect each entity's swept SAM3 mask $m_{e}=\bigcup_{t}\mathrm{SAM3}(e,t)$ across the trace into the atlas $\mathcal{S}=\{m_{e}\}_{e\in\mathcal{E}_c}$. Each predicate $p\!\in\!\mathcal{P}_c$ is then evaluated as a Boolean stream $b_p(t)=p(\sigma,t)$, and each clause $\varphi_k$ is evaluated by a finite-trace bottom-up monitor that returns a truth value $r_k$ together with a violation witness $W_k\!\subseteq\!\mathcal{E}_c\times[1,T]$ obtained by template-driven extraction (e.g., $\mathbf{G}(p\!\rightarrow\!q)$ produces $W_k=\{(e,t):b_p(t)\wedge\neg b_q(t)\}$, $\mathbf{G}\,p$ produces the unsatisfied frames of $b_p$); satisfied clauses contribute $W_k=\emptyset$. The rollout reward is the binary verdict $r(x_0;y)=\mathbf{1}[\sigma\models\Phi_c]\!\in\!\{0,1\}$; we pack the per-clause witnesses into $\mathcal{V}=\{(k,W_k)\}_{k=1}^{K}$, which together with $\mathcal{S}$ is consumed by the credit-aware NFT loss in \S\ref{sec:mask}.

\subsection{Credit-Aware NFT Loss}
\label{sec:mask}

Video diffusion models update a high-dimensional pixel--frame tensor whose data manifold is low-dimensional~\cite{pope2021intrinsic,rombach2022ldm} and whose reward depends on only a small spatio-temporal region of that tensor~\cite{vipo,localdpo,densedpo}. Online RL is therefore prone to accumulate gradient error on coordinates outside this region, eventually causing off-manifold drift and policy collapse~\cite{grpoguard2025,sagegrpo}. Our binary reward makes things worse: an all-or-nothing gain governs the entire rollout, so reward-irrelevant pixels are pulled toward the positive or reflection target at the same strength as those carrying signal. We therefore restrict the residual of Eq.~\eqref{eq:nft_loss} to a spatio-temporal mask $\mathbf{M}$, aggregated from the per-rollout outputs of \S\ref{sec:reward}.

\textbf{Spatial-temporal credit mask.} We use a single $\mathbf{M}$ shared across all $N$ rollouts of a group rather than $N$ per-sample masks, so that the positive and negative branches of the loss compete on a common support---successful rollouts are reinforced at exactly the coordinates where failed rollouts' drift is penalized. Let $\pi_{t}:\mathcal{E}_c\times[1,T]\to 2^{[1,T]}$ denote the projection of a witness set onto its frame indices. The temporal axis collects every frame implicated by any failing clause across the group, and the spatial axis unions the entity atlases of all rollouts:
\begin{equation}
\mathbf{M}_T \;=\; \bigcup_{i,k}\, \pi_{t}\!\big(W_k^{(i)}\big), \qquad
\mathbf{M}_S \;=\; \bigcup_{i,e}\, m^{(i)}_{e}, \qquad
\mathbf{M} \;=\; \mathbf{M}_T \otimes \mathbf{M}_S.
\label{eq:group_mask}
\end{equation}
We treat $\mathbf{M}_T\!\in\!\{0,1\}^{T}$ and $\mathbf{M}_S\!\in\!\{0,1\}^{H\times W}$ as binary masks identified with their support sets, so $\otimes$ produces a binary $\mathbf{M}\!\in\!\{0,1\}^{T\times H\times W}$. $\mathbf{M}_T$ contributions come only from failed clauses (satisfied ones give $W_k^{(i)}=\emptyset$), whereas $\mathbf{M}_S$ unions all rollouts regardless of $r^{(i)}$ to preserve the shared support. If the whole group succeeds, $\mathbf{M}_T=\emptyset$ and $\mathbf{M}\equiv 0$, matching the vanishing group-relative advantage.

\textbf{Credit-aware NFT loss.} Gating both branches of Eq.~\eqref{eq:nft_loss} by $\mathbf{M}$ yields
\begin{equation}
  \Lnft^{\mathrm{CA}}(\theta) \;=\; \mathbb{E}_{i,t,\epsilon}\!\left[\, r_i\big\| \mathbf{M}\odotm(v_\theta^{+} - v^{(i)})\big\|^{2} \,+\, (1-r_i)\big\| \mathbf{M}\odotm(v_\theta^{-} - v^{(i)})\big\|^{2}\,\right],
  \label{eq:masked_nft}
\end{equation}
where $\|\cdot\|^{2}$ is the element-wise sum of squares, $\odotm$ broadcasts along the channel dimension, and $\mathbf{M}\!\equiv\!1$ recovers Eq.~\eqref{eq:nft_loss} exactly. Analysis in Appendix~\ref{app:nft_noise} shows that our credit-aware NFT loss preserves the improvement direction of the original NFT loss (Eq.~\eqref{eq:nft_loss}) while restricting updates to task-related pixels, thereby avoiding error accumulation on unrelated pixels---a substantial source of off-manifold drift in negative-aware finetuning~\cite{grpoguard2025,sagegrpo}. This design is critical on high-dimensional video: estimating the improvement direction jointly from a small rollout batch and the velocity anchor $v_{\rm old}$ is substantially noisy, and since reward depends on only a small spatio-temporal region of the pixel volume~\cite{vipo,localdpo,densedpo}, our mask is essential to confine updates to that region and prevent the estimator noise from accumulating on the dominant task-irrelevant pixels.

\subsection{Corrective Reflow Loss}
\label{sec:aux}

Existing video RL post-training methods process each rollout in a group in isolation: every $\mathbf{x}_0^{(i)}$ contributes only through its own residual paired with its scalar reward $r_i$, leaving the relational structure across the group unused~\cite{diffusionnft,dancegrpo,flowgrpo,mixgrpo2025}. Our setting makes this structure unusually informative: under TI2V conditioning $y=(c,\mathbf{I})$, every within-group rollout starts from the same first frame, so the within-group successful rollouts are i.i.d.\ samples from the same condition-specific successful-rollout distribution, and their empirical mean is a pixel-level corrective continuation of any failed rollout exactly where the failure occurred. Instead of letting this relational signal act only implicitly, \methodname{} proposes a novel corrective reflow loss that explicitly supervises how each negative should be corrected on its failed regions, leading to faster and more stable post-training.

Let $\mathcal{P}=\{i:r_i\!=\!1\}$ and $\mathcal{N}=\{i:r_i\!=\!0\}$ denote the within-group positive and negative index sets. We use the empirical mean of the within-group positives as the corrective target:
\begin{equation}
  \xbarpos \;:=\; \frac{1}{|\mathcal{P}|}\sum_{j\in\mathcal{P}}\mathbf{x}_0^{(j)}.
  \label{eq:positive_mean}
\end{equation}
Conditioned on $y$, the positives in $\mathcal{P}$ are i.i.d.\ draws from the same successful-rollout distribution, so $\xbarpos$ is an unbiased estimator of the within-condition positive mean (App.~\ref{app:masks_quadratic}). Aggregating over the entire positive set also eliminates the matching-bias term that nearest-positive selection in any embedding would introduce.

\textbf{Loss form.} We reuse the group-shared mask $\mathbf{M}$ of Eq.~\eqref{eq:group_mask}: its spatial support $\mathbf{M}_S$ unions the entity atlases of both within-group positives and negatives, jointly bounding the failure-relevant support on the negatives and the corresponding corrective-content support on the positives---the dual support a substitution mask requires.  Inverting the rectified-flow interpolation Eq.~\eqref{eq:flow_interpolation} gives the model's one-step $\mathbf{x}_0$-prediction $\xhat_0^{(i)} := \mathbf{x}_t^{(i)} - t\,v_\theta(\mathbf{x}_t^{(i)},t,y)$. The corrective reflow loss regresses $\xhat_0^{(i)}$ toward $\xbarpos$ on $\mathbf{M}$, applied only to negatives:
\begin{equation}
  \Lcr(\theta) \;=\; \mathbb{E}_{i\in\mathcal{N},\,t,\,\epsilon}\!\left[\,\bigl\|\,\mathbf{M}\odotm \big(\xhat_0^{(i)} - \xbarpos\big)\,\bigr\|^{2}\,\right].
  \label{eq:aux}
\end{equation}
Analysis in Appendix~\ref{app:variance} shows that our corrective reflow loss stabilizes training with a strictly lower-variance \emph{parameter-gradient} estimator: the within-group positive mean directly supplies the reward-induced improvement direction, eliminating the EMA-plug-in noise floor that NFT~\cite{diffusionnft} incurs from reconstructing the same direction implicitly through the reflection of $v_{\rm old}$.

\textbf{Overall training objective.} Combining the credit-aware NFT loss, the corrective reflow loss, and a KL regularizer to a frozen reference policy $\pi_{\text{ref}}$ with parameters $\theta_{\text{ref}}$ (the pretrained Vidar policy used to initialize post-training), the full \methodname{} objective reads
\begin{equation}
  \Lcore(\theta) \;=\; \Lnft^{\mathrm{CA}}(\theta) \;+\; \lambda_{\rm CR}\Lcr(\theta) \;+\; \lambda_{\text{KL}}\,\mathcal{L}_{\text{KL}}(\theta; \theta_{\text{ref}}),
  \label{eq:total}
\end{equation}
where $\lambda_{\rm CR},\lambda_{\text{KL}}>0$ are scalar weights and $\mathcal{L}_{\text{KL}}$ is the standard velocity-MSE KL surrogate to the reference policy. 

%=========================================================
\section{Experiments}
\label{sec:exp}
%=========================================================
We design experiments to answer two questions: (Q1) does the compositional reward of \S\ref{sec:reward} better reflect downstream manipulation success than existing video reward models, and does post-training under it yield higher real-task success? (Q2) does \methodname{} optimize a sparse binary task reward more effectively than prior diffusion/flow RL algorithms? \S\ref{sec:exp_setup} fixes the protocol; \S\ref{sec:exp_main} answers Q1+Q2 jointly via downstream task success and isolates Q1 on a held-out reward-fidelity set; \S\ref{sec:exp_ablation} dissects \methodname{}'s components.

\subsection{Experimental Setup}
\label{sec:exp_setup}

\textbf{Tasks and base models.} We evaluate on eight RoboTwin~\cite{robotwin} bimanual manipulation tasks covering pick-and-place, articulated manipulation, multi-object arrangement, and goal-conditioned placement. Following the video-as-policy pipeline of Vidar~\cite{vidar}, we use the pretrained Vidar text-image-to-video model as the base VGM $G_\theta$ and Vidar's pretrained inverse dynamics model as the frozen action decoder $\mathcal{I}$; only $G_\theta$ is updated during post-training. The compositional constraint monitor of \S\ref{sec:reward} is synthesized once per task with Claude Opus 4.6 and frozen.

\textbf{Baselines.} We compare against three groups. (i) \emph{Policy-level reference}: $\pi_{0.5}$~\cite{pi05} and the un-post-trained Vidar base, reflecting the absolute scale of downstream success rather than competing with our post-training. (ii) \emph{RL post-training}: DanceGRPO~\cite{dancegrpo} and DiffusionNFT~\cite{diffusionnft}, applied to the same Vidar base under our binary reward with identical rollout budget and group size. (iii) \emph{Reward baselines} (Tab.~\ref{tab:reward}): EVA~\cite{wang2026eva} (IDM-based continuous), WMReward~\cite{chen2026abotphysworld} (JEPA-based continuous), VideoAlign~\cite{liu2025videoalign} (appearance-based continuous), and a VLM-as-judge that prompts a frozen VLM (Gemini 3 Flash~\cite{gemini}) for a binary verdict on downsampled frames. The same Gemini 3 Flash checkpoint is used for the attribute-state predicate $\textsc{State}(e,q)$ inside our monitor, so reward-fidelity comparisons against the VLM-as-judge baseline isolate the effect of LTL composition rather than VLM capability.

\textbf{Metrics.} For Q1+Q2 (\S\ref{sec:exp_main}, \S\ref{sec:exp_ablation}), we report \emph{task success rate}: the fraction of episodes in which $\mathcal{I}(\mathbf{x}_0)$, executed in RoboTwin, completes the task; each cell averages 100 episodes per task. For Q1 reward fidelity (Tab.~\ref{tab:reward}), we report agreement on a held-out 100-video set with two independent ground-truth labels (binary human and binary simulator-execution); see Tab.~\ref{tab:reward} caption for per-column definitions.

% Implementation details moved to Appendix~\ref{app:impl}.

\subsection{Main Results}
\label{sec:exp_main}

\textbf{Benchmark comparison.} Tab.~\ref{tab:main} compares \methodname{} with policy-level references ($\pi_{0.5}$, Vidar base) and three video RL post-training baselines (EVA, DanceGRPO, DiffusionNFT) on all eight RoboTwin tasks. DanceGRPO, DiffusionNFT, and \methodname{} share our binary compositional reward and differ only in optimizer; EVA bundles its own continuous IDM-based reward. We see that: 
i) \methodname{} achieves the best success rate on every task, outperforming both VLA baseline ($\pi_{0.5}$), video policy baseline (Vidar) and all video-RL post-training baselines, showing that our compositional reward and localized optimization strategy translate into consistent downstream execution gains. 
ii) EVA, the closest existing manipulation-video-RL baseline, improves only marginally over the Vidar base on average, suggesting that IDM-derived trajectory regularization alone is insufficient for evaluating contact-rich manipulation success. 
This is expected: penalizing low-level motion artifacts cannot verify whether a rollout satisfies the temporal and causal requirements of the task. 
iii) Under the same binary compositional reward, \methodname{} substantially outperforms DanceGRPO and vanilla DiffusionNFT, indicating that the proposed credit-aware NFT and corrective reflow objectives are more effective for optimizing sparse video rewards than generic group-relative policy gradients or unlocalized negative-aware finetuning. 
iv) The gains are especially large on long-horizon, multi-stage tasks such as \emph{put\_bottles\_dustbin} and \emph{stack\_bowls\_three}, where success depends on satisfying a sequence of compositional constraints. 
This supports our design hypothesis: LTL-based verification provides faithful judgments for long-horizon rollouts, while violation traces allow \methodname{} to focus updates on the reward-critical spatio-temporal regions.

\begin{table}[t]
  \centering
  \footnotesize
  \setlength{\tabcolsep}{4pt}
  \caption{Downstream task success rate on eight RoboTwin manipulation tasks. Top block: policy-level references reflect absolute scale. Bottom block: video RL post-training methods, all built on the same Vidar base with the same rollout budget; DanceGRPO, DiffusionNFT, and \methodname{} use our binary compositional reward, while EVA is a recent video-RL-for-robotic-manipulation method that uses its own continuous IDM-based reward.}
  \label{tab:main}
  \resizebox{\linewidth}{!}{%
  \begin{tabular}{lcccccccc|c}
    \toprule
    Method & rank-blk & put-bot & put-cab & stk-bowl & stk-blk & dump-bin & open-lap & hand-mic & Avg. \\
    \midrule
    $\pi_{0.5}$~\cite{pi05}                             & 25\% & 11\% & 33\% & 37\% &  9\% & 49\% & 46\% & 31\% & 30.1\% \\
    Vidar~\cite{vidar}                              & 52\% &  3\% & 22\% & 39\% & 25\% & 42\% & 36\% & 24\% & 30.4\% \\
    \midrule
    EVA~\cite{wang2026eva}                              & 45\% &  5\% & 20\% & 44\% & 39\% & 56\% & 50\% &  15\% & 34.3\% \\
    DanceGRPO~\cite{dancegrpo}                              & 48\% &  5\% & 20\% & 38\% & 22\% & 41\% & 35\% & 26\% & 29.4\% \\
    DiffusionNFT~\cite{diffusionnft}                                        & 52\% & 14\% & 36\% & 58\% & 41\% & 53\% & 49\% & 36\% & 42.4\% \\
    \textbf{\methodname{} (Ours)}                       & \textbf{61\%} & \textbf{24\%} & \textbf{49\%} & \textbf{75\%} & \textbf{54\%} & \textbf{61\%} & \textbf{57\%} & \textbf{44\%} & \textbf{53.1\%} \\
    \bottomrule
  \end{tabular}%
  }
\end{table}

\textbf{Training convergence.}
Fig.~\ref{fig:convergence} compares the post-training dynamics of DanceGRPO, DiffusionNFT, and \methodname{} on three representative RoboTwin tasks: \emph{rank\_blocks\_rgb}, \emph{put\_bottles\_dustbin}, and \emph{stack\_bowls\_three}.
All methods start from the same Vidar base and optimize the same binary compositional reward, so the curves directly reflect how effectively each optimizer converts sparse task feedback into model improvement.
We see that:
i) \methodname{} consistently rises faster and reaches the highest accumulated reward on all three tasks, indicating that localized correction improves not only final performance but also optimization efficiency.
ii) DanceGRPO shows unstable or limited improvement: its SDE-style exploration and rollout-level policy-gradient update are not targeted to the sparse frames and entities that determine manipulation success, making the relative-advantage signal weak under binary rewards.
iii) DiffusionNFT is more stable than DanceGRPO, but it plateaus at a lower reward because its negative-aware update is still applied over the full high-dimensional rollout without knowing which frames or entities caused failure.

\begin{figure}[t]
  \centering
  \includegraphics[width=0.98\linewidth]{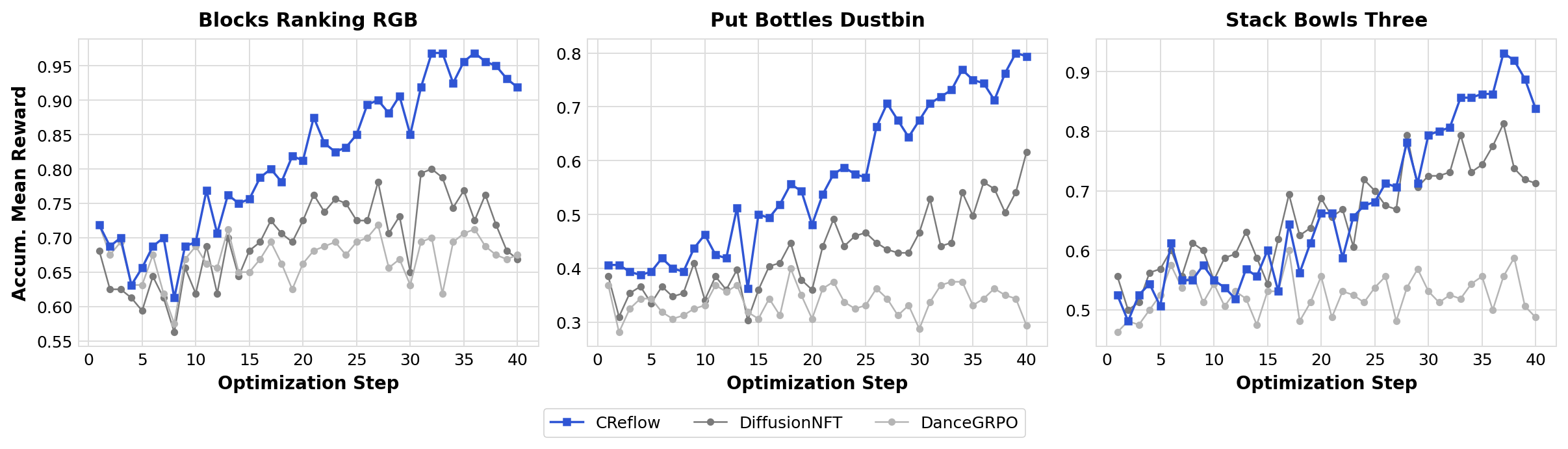}
  \caption{Training convergence on three RoboTwin tasks under the same binary compositional reward.
  Compared with DanceGRPO and DiffusionNFT, \methodname{} converts sparse task feedback into faster and more stable reward improvement by localizing updates to violation-relevant frames and entities.}
  \label{fig:convergence}
\end{figure}

\textbf{Reward fidelity to downstream success.}
Tab.~\ref{tab:reward} compares our compositional reward with continuous video reward models and a VLM-as-judge baseline on 100 held-out videos, each labeled by both human judgment and simulator execution.
We see that:
i) \methodname{} best matches downstream success, achieving $88.0\%$ agreement with human labels and $91.0\%$ with simulator labels.
ii) Its pairwise ranking accuracy also reaches $0.893$, clearly above the strongest continuous baseline EVA.
iii) This gap suggests that global appearance, motion, or executability scores are insufficient proxies for manipulation success, which requires verifying a compositional task specification.
iv) The VLM-as-judge baseline further shows that binary scoring alone is not enough; the advantage comes from LTL-based temporal verification over perception-grounded predicates.

\begin{table}[t]
  \centering
  \footnotesize
  \setlength{\tabcolsep}{6pt}
  \caption{Reward fidelity on 100 held-out evaluation videos with two independent ground-truth labels (human binary; simulator execution binary). Continuous rewards are reported with AUROC; binary rewards collapse AUROC to a single ROC point (marked ``---''). Pairwise rank acc.\ is computed on all (success, failure) pairs under simulator labels.}
  \label{tab:reward}
  \resizebox{\linewidth}{!}{%
  \begin{tabular}{lcccccc}
    \toprule
    Reward & Type & Acc.\ vs Human & F1 vs Human & Acc.\ vs Sim & AUROC vs Sim & Pairwise Rank Acc. \\
    \midrule
    EVA~\cite{wang2026eva}                                & Cont.\ (IDM)        & 74.0\% & 73.2\% & 78.0\% & 0.815 & 0.792  \\
    WMReward~\cite{chen2026abotphysworld}                 & Cont.\ (JEPA)       & 70.0\% & 69.1\% & 73.0\% & 0.758 & 0.741  \\
    VideoAlign~\cite{liu2025videoalign}                   & Cont.\ (Appearance) & 63.0\% & 62.4\% & 65.0\% & 0.652 & 0.621 \\
    VLM-as-judge                                          & Binary              & 71.0\% & 70.3\% & 74.0\% & --- & 0.718  \\
    \textbf{\methodname{} (Ours)}                         & \textbf{Binary}     & \textbf{88.0\%} & \textbf{87.4\%} & \textbf{91.0\%} & --- & \textbf{0.893 } \\
    \bottomrule
  \end{tabular}%
  }
\end{table}

\textbf{Computational overhead of the LTL monitor.}
We profile the per-rollout cost of our reward pipeline on a single H100.
A single rollout takes $\sim\!60$\,s to sample, and one optimizer step on the LoRA-adapted backbone takes $\sim\!70$\,s---both costs that any video-diffusion-RL method must already pay.
On top of this, evaluating our compositional LTL reward on a generated video adds $\sim\!15$\,s in total: SAM3 mask extraction  is the dominant term ($\sim\!8$\,s), the IDM gripper trace contributes $\sim\!1$\,s, and the remainder is spent on LTL evaluation and VLM queries. Our reward computation therefore accounts for only a small fraction of the per-step wall-clock budget relative to rollout sampling and gradient update.

% \textbf{Compositional reward localizes failure modes.} Fig.~\ref{fig:ltl_examples} visualizes violation traces on four characteristic failure modes: object teleportation (\emph{rank\_blocks\_rgb}), wrong contact ordering (\emph{put\_bottles\_dustbin}), goal predicate miss (\emph{stack\_bowls\_three}), and articulated-state miss (\emph{put\_object\_cabinet}). We see that: i) each failure mode maps to a distinct LTL clause family from \S\ref{sec:reward} (kinematic, contact-ordering, goal, articulated-state), demonstrating that the agent-synthesized formula matches real failure-mode diversity; ii) in three of four cases, $\mathcal{F}_k$ localizes to a few contact-event frames, justifying the per-frame mask construction in \S\ref{sec:mask} over a video-level reward; iii) the violating-entity set $\mathcal{E}_k$ correctly excludes irrelevant entities (e.g., non-target bottles, the un-grasped arm), so the credit-aware NFT loss leaves non-violating regions anchored to the prior.

% \begin{figure}[t]
%   \centering
%   \includegraphics[width=0.98\linewidth]{figures/ltl_examples.pdf}
%   \caption{Compositional monitor traces on four failure modes. Each row: generated frame strip with violating entity highlighted (SAM3 mask), violation interval $\mathcal{F}_k$ underneath, failed LTL clause $\varphi_k$ printed below.}
%   \label{fig:ltl_examples}
% \end{figure}

\subsection{Ablation Study}
\label{sec:exp_ablation}

\textbf{Component ablation.} Tab.~\ref{tab:component} dissects \methodname{} into its two core mechanisms (the credit-aware NFT loss and the corrective reflow loss) on three representative tasks. We see that: i) credit-aware NFT alone ($43.0\%$ vs.\ $36.0\%$) closes most of vanilla NFT's credit-assignment gap, confirming that the binary verdict depends on a small task-relevant subset of pixels and frames; ii) corrective reflow alone ($39.0\%$) yields a smaller gain---without the violation-trace mask, the within-group positive prototype still drives gradient on off-task coordinates and dilutes the signal.

\begin{wraptable}{r}{0.50\linewidth}
  \centering
  \footnotesize
  \vspace{-5mm}
  \caption{Component ablation. Success rate (\%) on three representative RoboTwin tasks.}
  \label{tab:component}
  \setlength{\tabcolsep}{2.5pt}
  \resizebox{\linewidth}{!}{%
  \begin{tabular}{lccc|c}
    \toprule
    Method & put-bot & stk-bowl & put-cab & Avg.\ \\
    \midrule
    Vanilla DiffusionNFT                                & 14\% & 58\% & 36\% & 36.0\% \\
    \quad + credit-aware NFT only                       & 19\% & 67\% & 43\% & 43.0\% \\
    \quad + corrective reflow only                      & 16\% & 62\% & 39\% & 39.0\% \\
    \quad + \textbf{both (Ours)}                                      & \textbf{24\%} & \textbf{75\%} & \textbf{49\%} & \textbf{49.3\%} \\
    \bottomrule
  \end{tabular}%
  }
  \vspace{-3mm}
\end{wraptable}

\textbf{Qualitative ablation.} Fig.~\ref{fig:mask} is the visual counterpart of the credit-aware-NFT-vs-vanilla row in Tab.~\ref{tab:component}: vanilla DiffusionNFT's uniform contrastive update visibly degrades the dustbin region during RL training (Fig.~\ref{fig:mask_nomask}), even though those off-task pixels never enter the binary reward; restricting the update to $\mathbf{M}$ (Fig.~\ref{fig:mask_bottle}) anchors the rest of the scene to the pretrained prior and prevents this off-task collapse (Fig.~\ref{fig:mask_overlay}).

\begin{figure}[t]
  \centering
  \begin{subfigure}[b]{0.32\linewidth}
    \centering
    \includegraphics[width=\linewidth]{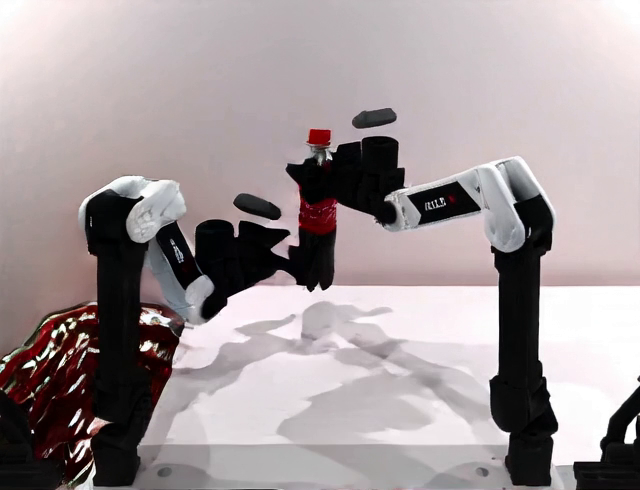}
    \caption{Vanilla DiffusionNFT (no mask).}
    \label{fig:mask_nomask}
  \end{subfigure}
  \hfill
  \begin{subfigure}[b]{0.32\linewidth}
    \centering
    \includegraphics[width=\linewidth]{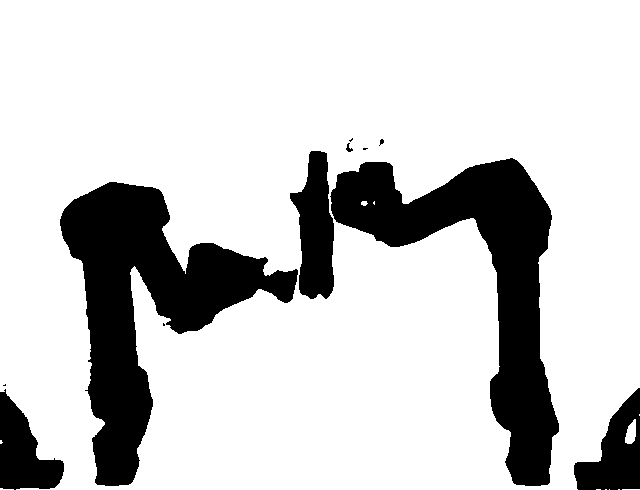}
    \caption{Group-shared mask $\mathbf{M}$.}
    \label{fig:mask_bottle}
  \end{subfigure}
  \hfill
  \begin{subfigure}[b]{0.32\linewidth}
    \centering
    \includegraphics[width=\linewidth]{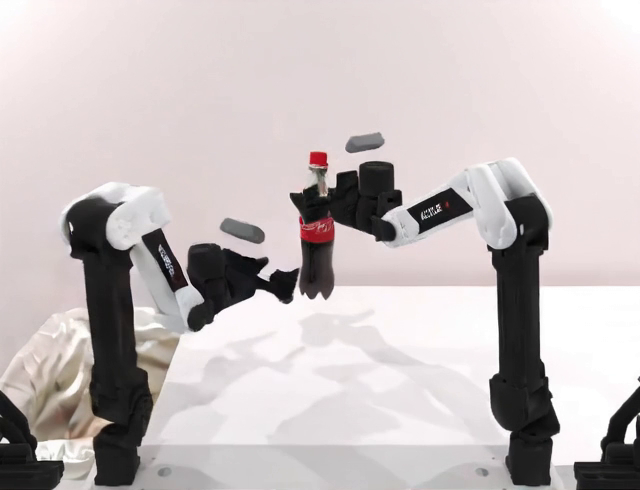}
    \caption{\methodname{} (Ours).}
    \label{fig:mask_overlay}
  \end{subfigure}
  \caption{Qualitative ablation on a \emph{put\_bottles\_dustbin} rollout. \textbf{(a)} Vanilla DiffusionNFT, with no mask, visibly degrades the dustbin region during RL training. \textbf{(b)} Our group-shared mask $\mathbf{M}$ covers the task-relevant entities (arms, bottle); its complement is anchored to the pretrained prior. \textbf{(c)} \methodname{} confines the contrastive update to $\mathbf{M}$, preserving off-task visual quality.}
  \label{fig:mask}
  \vspace{-3mm}
\end{figure}

%=========================================================
\section{Conclusion and Limitations}
\label{sec:conclusion}
%=========================================================
We propose \methodname{}, a corrective reflow framework for post-training manipulation VGMs with sparse compositional rewards. 
\methodname{} formulates task success as compositional LTL constraints and uses the resulting localized constraint violations to construct two targeted objectives: a credit-aware NFT loss that confines reward-induced updates to task-relevant regions, and a corrective reflow loss that uses same-condition successful rollouts as explicit correction targets for failed samples. 
Experiments show that \methodname{} yields reward judgments better aligned with human and simulator success labels than existing methods, leads to effective training to improve high dimensional video generation, and improves downstream execution success by 23.8 percentage points. \textbf{Limitation and future work.}
Our current monitor reasons over 2D image-plane states and therefore cannot fully resolve fine-grained 3D spatial relations.
Extending the constraint evaluation with depth or 3D state estimation is a natural direction for future work.

%=========================================================
% \begin{ack}
% % TODO: funding and competing interests disclosure (final version only).
% \end{ack}

\bibliographystyle{plainnat}
\bibliography{references}

@article{cosmos,
  title  = {Cosmos World Foundation Model Platform for Physical {AI}},
  author = {Agarwal, Niket and Ali, Arslan and Bala, Maciej and Balaji, Yogesh and others},
  journal= {arXiv preprint arXiv:2501.03575},
  year   = {2025}
}

@inproceedings{unipi,
  title    = {Learning Universal Policies via Text-Guided Video Generation},
  author   = {Du, Yilun and Yang, Mengjiao and Dai, Bo and Dai, Hanjun and Nachum, Ofir and Tenenbaum, Joshua B. and Schuurmans, Dale and Abbeel, Pieter},
  booktitle= {Advances in Neural Information Processing Systems (NeurIPS)},
  year     = {2023}
}

@misc{ajay2023vlp,
  title  = {Video Language Planning},
  author = {Du, Yilun and Yang, Mengjiao and Florence, Pete and Xia, Fei and Wahid, Ayzaan and Ichter, Brian and Sermanet, Pierre and Yu, Tianhe and Abbeel, Pieter and Tenenbaum, Joshua B. and Kaelbling, Leslie and Zeng, Andy and Tompson, Jonathan},
  year   = {2023},
  note   = {arXiv preprint arXiv:2310.10625}
}

@article{bharadhwaj2024gen2act,
  title={Gen2act: Human video generation in novel scenarios enables generalizable robot manipulation},
  author={Bharadhwaj, Homanga and Dwibedi, Debidatta and Gupta, Abhinav and Tulsiani, Shubham and Doersch, Carl and Xiao, Ted and Shah, Dhruv and Xia, Fei and Sadigh, Dorsa and Kirmani, Sean},
  journal={arXiv preprint arXiv:2409.16283},
  year={2024}
}

@misc{hu2024vpp,
  title  = {Video Prediction Policy: A Generalist Robot Policy with Predictive Visual Representations},
  author = {Hu, Yucheng and Guo, Yanjiang and Wang, Pengchao and Chen, Xiaoyu and Wang, Yen-Jen and Zhang, Jianke and Sreenath, Koushil and Lu, Chaochao and Chen, Jianyu},
  year   = {2024},
  note   = {arXiv preprint arXiv:2412.14803}
}

@article{lvp,
  title  = {Large Video Planner Enables Generalizable Robot Control},
  author = {Chen, Boyuan and Zhang, Tianyuan and Geng, Haoran and Song, Kiwhan and Zhang, Caiyi and Li, Peihao and Freeman, William T. and Malik, Jitendra and Abbeel, Pieter and Tedrake, Russ and Sitzmann, Vincent and Du, Yilun},
  journal= {arXiv preprint arXiv:2512.15840},
  year   = {2025}
}

@article{vidar,
  title  = {{Vidar}: Embodied Video Diffusion Model for Generalist Manipulation},
  author = {Feng, Yao and Tan, Hengkai and Mao, Xinyi and Xiang, Chendong and Liu, Guodong and Huang, Shuhe and Su, Hang and Zhu, Jun},
  journal= {arXiv preprint arXiv:2507.12898},
  year   = {2025}
}

@misc{tan2025anypos,
  title  = {{AnyPos}: Automated Task-Agnostic Actions for Bimanual Manipulation},
  author = {Tan, Hengkai and Feng, Yao and Mao, Xinyi and Huang, Shuhe and Liu, Guodong and Hao, Zhongkai and Su, Hang and Zhu, Jun},
  year   = {2025},
  note   = {arXiv preprint arXiv:2507.12768}
}

@article{dream2flow,
  title  = {{Dream2Flow}: Bridging Video Generation and Open-World Manipulation with {3D} Object Flow},
  author = {Dharmarajan, Karthik and Huang, Wenlong and Wu, Jiajun and Fei-Fei, Li and Zhang, Ruohan},
  journal= {arXiv preprint arXiv:2512.24766},
  year   = {2025}
}

@article{emboalign,
  title  = {{EmboAlign}: Aligning Video Generation with Compositional Constraints for Zero-Shot Manipulation},
  author = {Zhang, Gehao and Ni, Zhenyang and Mohapatra, Payal and Liu, Han and Zhang, Ruohan and Zhu, Qi},
  journal= {arXiv preprint arXiv:2603.05757},
  year   = {2026}
}

@misc{diffusionnft,
  title  = {{DiffusionNFT}: Online Diffusion Reinforcement with Forward Process},
  author = {Zheng, Kaiwen and Chen, Huayu and Ye, Haotian and Wang, Haoxiang and Zhang, Qinsheng and Jiang, Kai and Su, Hang and Ermon, Stefano and Zhu, Jun and Liu, Ming-Yu},
  year   = {2025},
  note   = {arXiv preprint arXiv:2509.16117}
}

@misc{flowgrpo,
  title  = {{Flow-GRPO}: Training Flow Matching Models via Online {RL}},
  author = {Liu, Jie and Liu, Gongye and Liang, Jiajun and Li, Yangguang and Liu, Jiaheng and Wang, Xintao and Wan, Pengfei and Zhang, Di and Ouyang, Wanli},
  year   = {2025},
  note   = {arXiv preprint arXiv:2505.05470}
}

@misc{dancegrpo,
  title  = {{DanceGRPO}: Unleashing {GRPO} on Visual Generation},
  author = {Xue, Zeyue and Wu, Jie and Gao, Yu and Kong, Fangyuan and Zhu, Lingting and Chen, Mengzhao and Liu, Zhiheng and Liu, Wei and Guo, Qiushan and Huang, Weilin and Luo, Ping},
  year   = {2025},
  note   = {arXiv preprint arXiv:2505.07818}
}

@misc{mixgrpo2025,
  title  = {{MixGRPO}: Unlocking Flow-based {GRPO} Efficiency with Mixed {ODE-SDE}},
  author = {Li, Junzhe and Cui, Yutao and Huang, Tao and Ma, Yinping and Fan, Chun and Yang, Miles and Zhong, Zhao and Bo, Liefeng},
  year   = {2025},
  note   = {arXiv preprint arXiv:2507.21802}
}

@misc{grpoguard2025,
  title  = {{GRPO-Guard}: Mitigating Implicit Over-Optimization in Flow Matching via Regulated Clipping},
  author = {Wang, Jing and Liang, Jiajun and Liu, Jie and Liu, Henglin and Liu, Gongye and Zheng, Jun and Pang, Wanyuan and Ma, Ao and Xie, Zhenyu and Wang, Xintao and Wang, Meng and Wan, Pengfei and Liang, Xiaodan},
  year   = {2025},
  note   = {arXiv preprint arXiv:2510.22319}
}

@inproceedings{pope2021intrinsic,
  title     = {The Intrinsic Dimension of Images and Its Impact on Learning},
  author    = {Pope, Phillip and Zhu, Chen and Abdelkader, Ahmed and Goldblum, Micah and Goldstein, Tom},
  booktitle = {International Conference on Learning Representations (ICLR)},
  year      = {2021},
  note      = {arXiv preprint arXiv:2104.08894}
}

@inproceedings{rombach2022ldm,
  title     = {High-Resolution Image Synthesis with Latent Diffusion Models},
  author    = {Rombach, Robin and Blattmann, Andreas and Lorenz, Dominik and Esser, Patrick and Ommer, Bj\"orn},
  booktitle = {IEEE/CVF Conference on Computer Vision and Pattern Recognition (CVPR)},
  year      = {2022},
  note      = {arXiv preprint arXiv:2112.10752}
}

@misc{tagrpo,
  title  = {{TAGRPO}: Boosting {GRPO} on Image-to-Video Generation with Direct Trajectory Alignment},
  author = {Wang, Jin and Lu, Jianxiang and Xu, Guangzheng and Chen, Comi and Yang, Haoyu and Wang, Linqing and Chen, Peng and Chen, Mingtao and Hu, Zhichao and Wu, Longhuang and others},
  year   = {2026},
  note   = {arXiv preprint arXiv:2601.05729}
}

@misc{vampo,
  title  = {{VAMPO}: Policy Optimization for Improving Visual Dynamics in Video Action Models},
  author = {Ge, Zirui and Ding, Pengxiang and Yin, Baohua and Wang, Qishen and Xie, Zhiyong and Wang, Yemin and Wang, Jinbo and Li, Hengtao and Suo, Runze and Song, Wenxuan and others},
  year   = {2026},
  note   = {arXiv preprint arXiv:2603.19370}
}

@misc{vipo,
  title  = {Seeing What Matters: Visual Preference Policy Optimization for Visual Generation},
  author = {Ni, Ziqi and others},
  year   = {2025},
  note   = {arXiv preprint arXiv:2511.18719}
}

@misc{localdpo,
  title  = {Mind the Generative Details: Direct Localized Detail Preference Optimization for Video Diffusion Models},
  author = {Huang, Zitong and Zhang, Kaidong and Ding, Yukang and Gao, Chao and Ding, Rui and Chen, Ying and Zuo, Wangmeng},
  year   = {2026},
  note   = {arXiv preprint arXiv:2601.04068}
}

@misc{densedpo,
  title  = {{DenseDPO}: Fine-Grained Temporal Preference Optimization for Video Diffusion Models},
  author = {Wu, Ziyi and Kag, Anil and Skorokhodov, Ivan and Menapace, Willi and Mirzaei, Ashkan and Gilitschenski, Igor and Tulyakov, Sergey and Siarohin, Aliaksandr},
  year   = {2025},
  note   = {arXiv preprint arXiv:2506.03517}
}

@misc{wang2026eva,
  title  = {{EVA}: Aligning Video World Models with Executable Robot Actions via Inverse Dynamics Rewards},
  author = {Wang, Ruixiang and Liu, Qingming and Deng, Yueci and Liu, Guiliang and Liu, Zhen and Jia, Kui},
  year   = {2026},
  note   = {arXiv preprint arXiv:2603.17808}
}

@misc{liu2025videoalign,
  title  = {Improving Video Generation with Human Feedback},
  author = {Liu, Jie and Liu, Gongye and Liang, Jiajun and Yuan, Ziyang and Liu, Xiaokun and Zheng, Mingwu and Wu, Xiele and Wang, Qiulin and Xia, Menghan and Wang, Xintao and others},
  year   = {2025},
  note   = {arXiv preprint arXiv:2501.13918}
}

@misc{sam3,
  title  = {{SAM 3}: Segment Anything with Concepts},
  author = {Carion, Nicolas and Gustafson, Laura and Hu, Yuan-Ting and Debnath, Shoubhik and Hu, Ronghang and Suris, Didac and Ryali, Chaitanya and others},
  year   = {2025},
  note   = {arXiv preprint arXiv:2511.16719. Meta AI}
}

@misc{wang2026worldcompass,
  title  = {{WorldCompass}: Reinforcement Learning for Long-Horizon World Models},
  author = {Wang, Zehan and Wang, Tengfei and Zhang, Haiyu and Zuo, Xuhui and Wu, Junta and Wang, Haoyuan and Sun, Wenqiang and Wang, Zhenwei and Cao, Chenjie and Zhao, Hengshuang and others},
  year   = {2026},
  note   = {arXiv preprint arXiv:2602.09022}
}

@misc{sagegrpo,
  title  = {Manifold-Aware Exploration for Reinforcement Learning in Video Generation},
  author = {Zheng, Mingzhe and Kong, Weijie and Wu, Yue and Jiang, Dengyang and Ma, Yue and He, Xuanhua and Lin, Bin and Gong, Kaixiong and Zhong, Zhao and Bo, Liefeng and others},
  year   = {2026},
  note   = {arXiv preprint arXiv:2603.21872}
}

@article{videogpa,
  title  = {{VideoGPA}: Distilling Geometry Priors for {3D}-Consistent Video Generation},
  author = {Du, Hongyang and Ye, Junjie and Cong, Xiaoyan and Li, Runhao and Ni, Jingcheng and Agarwal, Aman and Zhou, Zeqi and Li, Zekun and Balestriero, Randall and Wang, Yue},
  journal= {arXiv preprint arXiv:2601.23286},
  year   = {2026}
}

@article{chen2026abotphysworld,
  title  = {{ABot-PhysWorld}: Interactive World Foundation Model for Robotic Manipulation with Physics Alignment},
  author = {Chen, Yuzhi and Chen, Ronghan and Huo, Dongjie and Yang, Yandan and Qi, Dekang and Liu, Haoyun and Lin, Tong and Zeng, Shuang and Xiao, Junjin and Chang, Xinyuan and others},
  journal= {arXiv preprint arXiv:2603.23376},
  year   = {2026}
}

@article{robotwin,
  title  = {{RoboTwin 2.0}: A Scalable Data Generator and Benchmark with Strong Domain Randomization for Robust Bimanual Robotic Manipulation},
  author = {Chen, Tianxing and Chen, Zanxin and Chen, Baijun and Cai, Zijian and Liu, Yibin and Li, Zixuan and Liang, Qiwei and Lin, Xianliang and Ge, Yiheng and Gu, Zhenyu and others},
  journal= {arXiv preprint arXiv:2506.18088},
  year   = {2025}
}

@article{sentinel2026,
  title  = {{SENTINEL}: A Multi-Level Formal Framework for Safety Evaluation of Foundation Model-based Embodied Agents},
  author = {Zhan, Simon Sinong and Liu, Yao and Wang, Philip and Wang, Zinan and Wang, Qineng and Peng, Yiyan and Ruan, Zhian and Shi, Xiangyu and Cao, Xinyu and Yang, Frank and Wang, Kangrui and Shao, Huajie and Li, Manling and Zhu, Qi},
  journal= {arXiv preprint arXiv:2510.12985},
  year   = {2025}
}

@article{li2024embodied,
  title={Embodied agent interface: Benchmarking llms for embodied decision making},
  author={Li, Manling and Zhao, Shiyu and Wang, Qineng and Wang, Kangrui and Zhou, Yu and Srivastava, Sanjana and Gokmen, Cem and Lee, Tony and Li, Li E and Zhang, Ruohan and others},
  journal={Advances in Neural Information Processing Systems},
  volume={37},
  pages={100428--100534},
  year={2024}
}

@article{motus,
  title={Motus: A unified latent action world model},
  author={Bi, Hongzhe and Tan, Hengkai and Xie, Shenghao and Wang, Zeyuan and Huang, Shuhe and Liu, Haitian and Zhao, Ruowen and Feng, Yao and Xiang, Chendong and Rong, Yinze and others},
  journal={arXiv preprint arXiv:2512.13030},
  year={2025}
}

@article{dit4dit,
  title={DiT4DiT: Jointly Modeling Video Dynamics and Actions for Generalizable Robot Control},
  author={Ma, Teli and Zheng, Jia and Wang, Zifan and Jiang, Chunli and Cui, Andy and Liang, Junwei and Yang, Shuo},
  journal={arXiv preprint arXiv:2603.10448},
  year={2026}
}

@article{dreamzero,
  title={World action models are zero-shot policies},
  author={Ye, Seonghyeon and Ge, Yunhao and Zheng, Kaiyuan and Gao, Shenyuan and Yu, Sihyun and Kurian, George and Indupuru, Suneel and Tan, You Liang and Zhu, Chuning and Xiang, Jiannan and others},
  journal={arXiv preprint arXiv:2602.15922},
  year={2026}
}

@inproceedings{hpsv3,
  title={Hpsv3: Towards wide-spectrum human preference score},
  author={Ma, Yuhang and Wu, Xiaoshi and Sun, Keqiang and Li, Hongsheng},
  booktitle={Proceedings of the IEEE/CVF International Conference on Computer Vision},
  pages={15086--15095},
  year={2025}
}

@inproceedings{clip,
  title={Learning transferable visual models from natural language supervision},
  author={Radford, Alec and Kim, Jong Wook and Hallacy, Chris and Ramesh, Aditya and Goh, Gabriel and Agarwal, Sandhini and Sastry, Girish and Askell, Amanda and Mishkin, Pamela and Clark, Jack and others},
  booktitle={International conference on machine learning},
  pages={8748--8763},
  year={2021},
  organization={PmLR}
}

@article{rekep,
  title={Rekep: Spatio-temporal reasoning of relational keypoint constraints for robotic manipulation},
  author={Huang, Wenlong and Wang, Chen and Li, Yunzhu and Zhang, Ruohan and Fei-Fei, Li},
  journal={arXiv preprint arXiv:2409.01652},
  year={2024}
}

@inproceedings{codeasmonitor,
  title={Code-as-monitor: Constraint-aware visual programming for reactive and proactive robotic failure detection},
  author={Zhou, Enshen and Su, Qi and Chi, Cheng and Zhang, Zhizheng and Wang, Zhongyuan and Huang, Tiejun and Sheng, Lu and Wang, He},
  booktitle={Proceedings of the Computer Vision and Pattern Recognition Conference},
  pages={6919--6929},
  year={2025}
}

@article{safebimanual,
  title={SafeBimanual: Diffusion-based trajectory optimization for safe bimanual manipulation},
  author={Deng, Haoyuan and Guo, Wenkai and Wang, Qianzhun and Wu, Zhenyu and Wang, Ziwei},
  journal={arXiv preprint arXiv:2508.18268},
  year={2025}
}

@misc{pi05,
  title  = {{$\pi_{0.5}$}: a Vision-Language-Action Model with Open-World Generalization},
  author = {Black, Kevin and Brohan, Anthony and Driess, Danny and Esmaeili, Adnan and Finn, Chelsea and Fuentes, Niccolo and Groom, Brian and Hausman, Karol and Ichter, Brian and Jakubczak, Szymon and others},
  year   = {2025},
  note   = {arXiv preprint arXiv:2504.16054}
}

@article{gemini,
  title={Gemini: a family of highly capable multimodal models},
  author={Team, Gemini and Anil, Rohan and Borgeaud, Sebastian and Alayrac, Jean-Baptiste and Yu, Jiahui and Soricut, Radu and Schalkwyk, Johan and Dai, Andrew M and Hauth, Anja and Millican, Katie and others},
  journal={arXiv preprint arXiv:2312.11805},
  year={2023}
}

@article{lee2023aligning,
  title={Aligning text-to-image models using human feedback},
  author={Lee, Kimin and Liu, Hao and Ryu, Moonkyung and Watkins, Olivia and Du, Yuqing and Boutilier, Craig and Abbeel, Pieter and Ghavamzadeh, Mohammad and Gu, Shixiang Shane},
  journal={arXiv preprint arXiv:2302.12192},
  year={2023}
}

%%%%%%%%%%%%%%%%%%%%%%%%%%%%%%%%%%%%%%%%%%%%%%%%%%%%%%%%%%%%
\newpage
\appendix

%%%%%%%%%%%%%%%%%%%%%%%%%%%%%%%%%%%%%%%%%%%%%%%%%%%%%%%%%%%%
\section{Theoretical analysis of \methodname{}}
\label{app:theory}

This appendix analyzes \methodname{} as a \emph{localized corrective stochastic estimator} for sparse-reward video diffusion RL. The two ingredients introduced in \S\ref{sec:method}---violation-trace masking (\S\ref{sec:mask}) and the corrective reflow loss (\S\ref{sec:aux})---are designed to fix two specific pathologies that arise when a sparse binary reward is fed into a flow-matching estimator on high-dimensional video. First, the reward signal is concentrated on a small failure-relevant region, but a vanilla negative-aware estimator spreads its gradient across the entire spatio-temporal volume, accumulating off-task residual noise on every step. Second, the only positive-side supervision available to the negative branch must be reconstructed from a reflected EMA plug-in, whose variance scales with the model's own drift and is unrelated to the actual reward signal. \methodname{} attacks both problems at the estimator level: violation masking projects the reward-induced learning signal onto the violation-traced coordinates and \emph{exactly} cancels the off-mask sample noise that both branches structurally inject (\S\ref{app:setup}--\S\ref{app:nft_noise}); the corrective branch replaces the reflected EMA plug-in with the within-group positive mean $\xbarpos$, whose conditional first moment is the marginal positive prototype $\mathbb{E}_{\pi^{+}(\cdot\mid y)}[x_0]$ (Lemma~\ref{lem:aux_fp})---a $t$-uniform distributional anchor that coincides with the flow-matching positive velocity $v^{+}(x_t,t)$ at high noise---and, when paired with the $\mathbf{x}_0$-space loss formulation's automatic $t^{2}$ down-weighting and the $|\mathcal{P}|^{-1}$ within-group shrinkage, yields a strictly lower-variance \emph{parameter-gradient} estimator at small $t$ (\S\ref{app:variance}).

We adopt DiffusionNFT~\cite{diffusionnft} as the population reference for the \emph{reward-induced improvement direction} $\Delta=v^{+}-v^{\text{old}}$, since it gives a closed-form expression for the direction along which any positive-flow update should move the policy (\S\ref{app:setup}, Theorem~\ref{thm:rg_optimum}). \methodname{} does not aim to reproduce the exact DiffusionNFT optimizer; rather, it remains positively aligned with $\Delta$ on the active coordinates while improving the bias--variance tradeoff of the stochastic estimator. The three main results below formalize this contribution: Theorem~\ref{thm:masked_rg_optimum} (\S\ref{app:setup}) shows that violation-trace masking realizes a coordinate-localized estimator that retains the full $\Delta$-pull on the active support and imposes no constraint elsewhere; Proposition~\ref{prop:direction} (\S\ref{app:direction}) shows that combining the two branches with a KL anchor yields an active-coordinate update positively aligned with $\Delta$ on the violation mask, with a coordinate-dependent step length that combines the negative-aware contrastive pull and the corrective positive-mean pull; and Proposition~\ref{prop:variance} (\S\ref{app:variance}) shows that, at the parameter-gradient level, the corrective branch (i) eliminates the persistent EMA reflection plug-in noise floor that the negative branch of NFT-style estimators carries by construction and (ii) leverages the $\mathbf{x}_0$-space loss formulation's automatic $t^{2}$ down-weighting (which converts the marginal-positive target's $\Theta(1/t^{2})$ divergence into an $O(t^{2})$ vanishing gradient floor) and the $|\mathcal{P}|^{-1}$ within-group averaging, yielding a strictly lower-variance gradient estimator at small $t$. \S\ref{app:assumption_discussion} discusses the assumptions used along the way.

\textbf{Notation: trace-covariance.} Throughout this appendix, for a random vector $X\in\mathbb{R}^{d}$ we use
\[
  \mathrm{tr}\,\mathrm{Cov}(X) \;:=\; \sum_{k=1}^{d}\mathrm{Var}(X_k) \;=\; \mathbb{E}\|X-\mathbb{E}X\|^{2}
\]
as the scalar measure of target noise. Since all losses considered here are coordinate-wise squared errors in velocity or $\xhat_0$ prediction space, this quantity directly summarizes the amount of randomness seen by the least-squares estimator. We caution, however, that target trace-covariance and \emph{parameter-gradient} trace-covariance need not move together: a loss formulated in $\xhat_0$-space implicitly carries a $t^{2}$ multiplicative factor on the velocity residual (Eq.~\eqref{eq:aux}), which contributes a $t^{4}$ multiplicative factor to the gradient covariance and can convert a $\Theta(1/t^{2})$-divergent target covariance into an $O(t^{2})$ vanishing gradient floor (\S\ref{app:variance}, Eqs.~\eqref{eq:aux_total_variance}--\eqref{eq:g_aux_cov}). Variance-reduction claims in this appendix are therefore stated at the gradient level whenever the two diverge.

\textbf{Notation: $\vold$ vs.\ $v^{\text{old}}$.} We follow the macro convention of \S\ref{sec:method}: $v^{\text{old}}(x_t,t)$ denotes the \emph{population} conditional velocity field of the rollout distribution $\pi^{\text{old}}$ (Eq.~\eqref{eq:conditional_velocity_theory} below), while $\vold(x_t,t)$ denotes the \emph{EMA model}'s velocity prediction used as a plug-in estimator of $v^{\text{old}}$. The two coincide only in the noiseless limit; their difference $\xi_{\mathrm{EMA}}$ is the source of the reflection plug-in noise analyzed in \S\ref{app:variance}.

\subsection{Setup: reward-induced improvement direction}
\label{app:setup}

Fix the conditioning tuple $c=(c_{\text{text}},c_{\text{img}})$ and suppress it from notation. Let $\pi^{\text{old}}(x_0)$ be the rollout distribution induced by the current policy, and let $r(x_0)\in\{0,1\}$ be the binary reward. Define
\begin{equation}
  p := \mathbb{E}_{x_0\sim\pi^{\text{old}}}[\,r(x_0)\,], \qquad \pi^{+}(x_0) := \frac{r(x_0)\,\pi^{\text{old}}(x_0)}{p}, \qquad \pi^{-}(x_0) := \frac{(1-r(x_0))\,\pi^{\text{old}}(x_0)}{1-p}.
  \label{eq:pi_pm}
\end{equation}
Under the rectified-flow forward convention
\[
  x_t \;=\; (1-t)\,x_0 + t\,\epsilon, \qquad \epsilon\sim\mathcal{N}(0,I), \qquad v \;=\; \epsilon-x_0 \;=\; \frac{x_t-x_0}{t},
\]
let $\pi_t^{\bullet}(x_t)$ be the time-$t$ marginal of $\pi^{\bullet}$ for $\bullet\in\{\text{old},+,-\}$. Define the conditional velocity field
\begin{equation}
  v^{\bullet}(x_t,t) \;:=\; \mathbb{E}_{x_0\sim\pi^{\bullet}(\cdot\mid x_t,t)}\!\big[\,\epsilon-x_0 \,\big|\, x_t,t\,\big] \;=\; \frac{x_t - \mathbb{E}_{\pi^{\bullet}}[x_0\mid x_t,t]}{t}.
  \label{eq:conditional_velocity_theory}
\end{equation}
The posterior mixture weight is
\begin{equation}
  \alpha(x_t,t) \;:=\; p\,\frac{\pi_t^{+}(x_t)}{\pi_t^{\text{old}}(x_t)}.
  \label{eq:alpha_theory}
\end{equation}
Bayes' rule gives the posterior split
\begin{equation}
  \pi^{\text{old}}(x_0\mid x_t,t) \;=\; \alpha(x_t,t)\,\pi^{+}(x_0\mid x_t,t) \;+\; \big(1-\alpha(x_t,t)\big)\,\pi^{-}(x_0\mid x_t,t),
  \label{eq:posterior_split_theory}
\end{equation}
and taking conditional expectations of $v=\epsilon-x_0$ yields
\begin{equation}
  v^{\text{old}} \;=\; \alpha\,v^{+} + (1-\alpha)\,v^{-}.
  \label{eq:vold_split_theory}
\end{equation}
We define the positive-flow improvement direction
\begin{equation}
  \Delta(x_t,t) \;:=\; v^{+}(x_t,t) - v^{\text{old}}(x_t,t) \;=\; (1-\alpha)\,\big(v^{+}-v^{-}\big).
  \label{eq:delta_theory}
\end{equation}

We recall the DiffusionNFT population optimizer in its $\beta$-parameterized form. For $\beta>0$, DiffusionNFT uses
\[
  v_\theta^{+} \;:=\; (1-\beta)\,v^{\text{old}} + \beta\,v_\theta, \qquad v_\theta^{-} \;:=\; (1+\beta)\,v^{\text{old}} - \beta\,v_\theta,
\]
and minimizes
\begin{equation}
  \mathcal{L}_{\text{NFT}}^{\beta}(\theta) \;=\; \mathbb{E}_{t,\epsilon,x_0\sim\pi^{\text{old}}}\!\Big[\, r(x_0)\,\big\| v_\theta^{+}(x_t,t)-v\big\|^{2} \;+\; (1-r(x_0))\,\big\| v_\theta^{-}(x_t,t)-v\big\|^{2}\,\Big].
  \label{eq:lnft_beta_theory}
\end{equation}

\begin{theorem}[DiffusionNFT population optimizer]
\label{thm:rg_optimum}
The pointwise minimizer of Eq.~\eqref{eq:lnft_beta_theory} satisfies
\begin{equation}
  v_{\theta^*}(x_t,t) \;=\; v^{\text{old}}(x_t,t) \;+\; \frac{2\alpha(x_t,t)}{\beta}\,\Delta(x_t,t).
  \label{eq:rg_optimum}
\end{equation}
\end{theorem}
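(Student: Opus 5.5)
The plan is to reduce the population objective Eq.~\eqref{eq:lnft_beta_theory} to a family of independent quadratic problems, one per point $(x_t,t)$, and solve each in closed form. I treat $v_\theta$ as a free function of $(x_t,t)$, which is the sense of ``pointwise minimizer''. I also treat $v^{\text{old}}$ as a fixed, stop-gradient function.

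\textbf{Step 1: disintegrate the objective.} I write the expectation over $(t,\epsilon,x_0)$ as an outer expectation over $(t,x_t)$ and an inner conditional expectation over $(x_0,\epsilon)$ given $x_t$. On each branch $v_\theta^{\pm}(x_t,t)$ depends only on $(x_t,t)$, so the inner problem at a fixed $(x_t,t)$ is to minimize over a vector $u\in\mathbb{R}^d$
\[
F(u)=\mathbb{E}\big[r\,\|(1-\beta)v^{\text{old}}+\beta u-v\|^{2}+(1-r)\,\|(1+\beta)v^{\text{old}}-\beta u-v\|^{2}\,\big|\,x_t,t\big].
\]

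\textbf{Step 2: conditional moments.} I need two facts, both from the posterior split Eq.~\eqref{eq:posterior_split_theory}.
\begin{itemize}
\item The posterior probability of reward is $\Pr(r=1\mid x_t,t)=\alpha(x_t,t)$. This follows because $r\,\pi^{\text{old}}=p\,\pi^{+}$, so $\Pr(r=1\mid x_t)=p\,\pi_t^{+}(x_t)/\pi_t^{\text{old}}(x_t)=\alpha$.
\item The conditional means of $v$ are $\mathbb{E}[v\mid x_t,r=1]=v^{+}$ and $\mathbb{E}[v\mid x_t,r=0]=v^{-}$, by the definition Eq.~\eqref{eq:conditional_velocity_theory}.
\end{itemize}
$F$ is a strictly convex quadratic in $u$, since its Hessian is $2\beta^2 I$. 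Its unique minimizer is therefore characterized by the first-order condition
\[
\alpha\beta\big[(1-\beta)v^{\text{old}}+\beta u-v^{+}\big]-(1-\alpha)\beta\big[(1+\beta)v^{\text{old}}-\beta u-v^{-}\big]=0 .
\]

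\textbf{Step 3: solve and simplify.} I divide the condition by $\beta>0$ and collect the $u$ terms to get
\[
\beta u=\alpha v^{+}-\alpha(1-\beta)v^{\text{old}}+(1-\alpha)(1+\beta)v^{\text{old}}-(1-\alpha)v^{-}.
\]
The only nontrivial step is eliminating $v^{-}$. I use Eq.~\eqref{eq:vold_split_theory} in the form $(1-\alpha)v^{-}=v^{\text{old}}-\alpha v^{+}$. After substituting, the $\beta$-independent multiples of $v^{\text{old}}$ cancel to $-2\alpha v^{\text{old}}$, and the $\beta$ terms combine to $\beta v^{\text{old}}$. This leaves
\[
\beta u=\beta v^{\text{old}}+2\alpha\,(v^{+}-v^{\text{old}}).
\]
With Eq.~\eqref{eq:delta_theory}, this is exactly $u=v^{\text{old}}+\tfrac{2\alpha}{\beta}\Delta$, which is Eq.~\eqref{eq:rg_optimum}.

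\textbf{Main obstacle.} I expect the only delicate point to be Step 2. It must justify that the conditional reward probability equals the $\alpha$ of Eq.~\eqref{eq:alpha_theory}, and that the per-branch conditional targets are $v^{\pm}$. This relies on $\epsilon$ being independent of $x_0$, so that conditioning on $r$ and $x_t$ only reweights the posterior over $x_0$. I also need $\pi_t^{\text{old}}(x_t)>0$, which holds under Gaussian noise for every $t>0$. Once those two facts are in place, the rest is routine quadratic algebra.
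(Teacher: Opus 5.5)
Your proof is correct and follows the paper's argument: condition on $(x_t,t)$, use the posterior split to turn the first-order condition into $\alpha\,(v_\theta^{+}-v^{+}) = (1-\alpha)\,(v_\theta^{-}-v^{-})$, and eliminate $v^{-}$ via $v^{\text{old}}=\alpha v^{+}+(1-\alpha)v^{-}$. Your substitution in the form $(1-\alpha)v^{-}=v^{\text{old}}-\alpha v^{+}$ is slightly cleaner than the paper's $v^{-}-v^{\text{old}}=-\tfrac{\alpha}{1-\alpha}\Delta$ because it never divides by $1-\alpha$, and your strict-convexity remark (Hessian $2\beta^{2}I$) makes explicit the uniqueness that the paper leaves implicit.
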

\begin{proof}
Condition on $(x_t,t)$ and differentiate Eq.~\eqref{eq:lnft_beta_theory} with respect to $v_\theta$. Since $\nabla_{v_\theta}v_\theta^{+}=\beta I$ and $\nabla_{v_\theta}v_\theta^{-}=-\beta I$, the first-order condition is
\begin{equation}
  \mathbb{E}\!\left[\,r(x_0)\,(v_\theta^{+}-v) \,\big|\, x_t,t\,\right] \;=\; \mathbb{E}\!\left[\,(1-r(x_0))\,(v_\theta^{-}-v) \,\big|\, x_t,t\,\right].
  \label{eq:nft_foc_theory}
\end{equation}
Using Eq.~\eqref{eq:posterior_split_theory},
\[
  \mathbb{E}[r(x_0)\,f(x_0)\mid x_t,t] \;=\; \alpha(x_t,t)\,\mathbb{E}_{\pi^+}[f(x_0)\mid x_t,t],
\]
and similarly $\mathbb{E}[(1-r(x_0))\,f(x_0)\mid x_t,t] = (1-\alpha)\,\mathbb{E}_{\pi^-}[f(x_0)\mid x_t,t]$. Thus Eq.~\eqref{eq:nft_foc_theory} becomes
\begin{equation}
  \alpha\,\big(v_\theta^{+}-v^{+}\big) \;=\; (1-\alpha)\,\big(v_\theta^{-}-v^{-}\big).
  \label{eq:nft_foc_alpha_theory}
\end{equation}
Let $u:=v_\theta-v^{\text{old}}$. Then $v_\theta^{+}-v^{+}=\beta u-\Delta$, and by Eq.~\eqref{eq:vold_split_theory}, $v^{-}-v^{\text{old}}=-\tfrac{\alpha}{1-\alpha}\Delta$, so $v_\theta^{-}-v^{-}=-\beta u+\tfrac{\alpha}{1-\alpha}\Delta$. Substituting into Eq.~\eqref{eq:nft_foc_alpha_theory} gives
\[
  \alpha(\beta u-\Delta) \;=\; (1-\alpha)\Big(-\beta u+\tfrac{\alpha}{1-\alpha}\Delta\Big),
\]
which simplifies to $\beta u=2\alpha\Delta$, i.e.\ $v_{\theta^*}=v^{\text{old}}+(2\alpha/\beta)\Delta$.
\end{proof}

\begin{assumption}[Population-level mask]
\label{ass:population_mask}
In the population analysis of the credit-aware NFT loss in Eq.~\eqref{eq:masked_nft} and the corrective reflow loss in Eq.~\eqref{eq:aux}, we treat the group-shared mask $\mathbf{M}\in\{0,1\}^{D}$ as a deterministic projection at the population level---equivalently, as $\sigma(x_t,t)$-measurable---so that $\mathbf{M}(\omega)$ can be factored out of $\mathbb{E}[\,\cdot\mid x_t,t]$. The empirical $\mathbf{M}$ in Eq.~\eqref{eq:group_mask} is computed from the within-group un-noised rollouts $\{\mathbf{x}_0^{(j)}\}_{j=1}^{N}$ (via the LTL violation traces $\{W_k^{(j)}\}$ and the SAM3 atlases $\{m^{(j),e}\}$); the assumption asserts that, at the population level, this empirical $\mathbf{M}$ collapses to the reward-locality support depending only on $(x_t,t)$ that underlies Eq.~\eqref{eq:mask_reward_relevance_corrected}. See \S\ref{app:assumption_discussion} for discussion.
\end{assumption}

\begin{theorem}[Violation-localized pointwise optimum]
\label{thm:masked_rg_optimum}
Under Assumption~\ref{ass:population_mask}, for the credit-aware NFT loss in Eq.~\eqref{eq:masked_nft} with mask $\mathbf{M}\in\{0,1\}^{D}$, the pointwise minimizer satisfies
\begin{equation}
  \mathbf{M}\odot\big(v_{\theta^*}(x_t,t)-v^{\text{old}}(x_t,t)\big) \;=\; \mathbf{M}\odot\frac{2\alpha(x_t,t)}{\beta}\,\Delta(x_t,t),
  \label{eq:masked_rg_optimum}
\end{equation}
with $\alpha$ and $\Delta=v^{+}-v^{\text{old}}$ as in Theorem~\ref{thm:rg_optimum}. At coordinates with $\mathbf{M}(\omega)=0$ the masked loss is constant in $v_\theta(\omega)$ and imposes no constraint.
\end{theorem}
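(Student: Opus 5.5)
The plan is to reduce the masked problem to the unmasked one coordinate by coordinate. The loss in Eq.~\eqref{eq:masked_nft} is a sum of coordinate-wise squared residuals, and the binary mask only multiplies each residual by $\mathbf{M}(\omega)\in\{0,1\}$. So I will first rewrite the pointwise (conditional on $(x_t,t)$) population objective as
\[
  \sum_{\omega}\mathbf{M}(\omega)\,\mathbb{E}\!\left[\,r(x_0)\big(v_\theta^{+}(\omega)-v(\omega)\big)^{2}+(1-r(x_0))\big(v_\theta^{-}(\omega)-v(\omega)\big)^{2}\,\Big|\,x_t,t\right],
\]
where $\mathbf{M}^2=\mathbf{M}$ because the mask is binary. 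Assumption~\ref{ass:population_mask} is what licenses pulling $\mathbf{M}(\omega)$ outside the conditional expectation. Without it, $\mathbf{M}$ would be correlated with $x_0$ through the reward and the traces, and the factorization would fail. This is the only genuinely delicate step, and I will state it explicitly as the point where the assumption enters.

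Next I split the coordinates into two cases. For $\mathbf{M}(\omega)=0$, the summand is identically zero, so the objective does not depend on $v_\theta(\omega)$ at all. This gives the second claim: there is no constraint at that coordinate and any value is a minimizer. For $\mathbf{M}(\omega)=1$, the summand is exactly the $\omega$-th coordinate of the unmasked pointwise objective from Eq.~\eqref{eq:lnft_beta_theory}. Because the objective is separable across coordinates, with no coupling at the pointwise level, minimizing over $v_\theta(\omega)$ reproduces the first-order condition Eq.~\eqref{eq:nft_foc_theory} restricted to coordinate $\omega$.

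On the active coordinates I then reuse the proof of Theorem~\ref{thm:rg_optimum} verbatim, coordinate-wise. The posterior split Eq.~\eqref{eq:posterior_split_theory} gives Eq.~\eqref{eq:nft_foc_alpha_theory} in coordinate $\omega$. Here $\alpha$ is a scalar shared across coordinates, and Eq.~\eqref{eq:vold_split_theory} holds componentwise. Substituting $u(\omega)=v_\theta(\omega)-v^{\text{old}}(\omega)$ yields $\beta u(\omega)=2\alpha\Delta(\omega)$. I will also note that each active-coordinate objective is a strictly convex quadratic in $v_\theta(\omega)$, with curvature $\beta^2$ times the total posterior weight $1$, so the stationary point is the unique minimizer there.

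Multiplying the resulting identity by $\mathbf{M}$ and combining it with the trivial inactive case gives Eq.~\eqref{eq:masked_rg_optimum}. I expect no computational obstacle beyond justifying the measurability and factorization step.
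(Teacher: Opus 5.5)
Your proposal is correct and follows essentially the same route as the paper: use Assumption~\ref{ass:population_mask} to pull $\mathbf{M}(\omega)$ out of $\mathbb{E}[\,\cdot\mid x_t,t]$, note that coordinates with $\mathbf{M}(\omega)=0$ impose no constraint, and reduce the coordinates with $\mathbf{M}(\omega)=1$ to the unmasked first-order condition solved in Theorem~\ref{thm:rg_optimum}. Your explicit strict-convexity remark for the active coordinates is a small but useful addition; the paper uses uniqueness only implicitly, by appealing to the unique solution of Theorem~\ref{thm:rg_optimum}.
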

\begin{proof}
At each $(x_t,t,\omega)$ the squared-error coordinate-wise gradient contains a factor $\mathbf{M}(\omega)^{2}\ge 0$. By Assumption~\ref{ass:population_mask}, $\mathbf{M}(\omega)$ is $\sigma(x_t,t)$-measurable and can be factored out of the conditional expectation $\mathbb{E}[\,\cdot\mid x_t,t]$, so the first-order condition at $(x_t,t,\omega)$ is
\[
  \mathbf{M}(\omega)^{2}\,\Big(\,\mathbb{E}[\,r(x_0)(v_\theta^{+}(\omega)-v(\omega))\mid x_t,t\,] \;-\; \mathbb{E}[\,(1-r(x_0))(v_\theta^{-}(\omega)-v(\omega))\mid x_t,t\,]\,\Big) \;=\; 0.
\]
At $\mathbf{M}(\omega)=0$ it is satisfied for any $v_\theta(\omega)$; at $\mathbf{M}(\omega)=1$ it reduces to the unmasked first-order condition Eq.~\eqref{eq:nft_foc_theory}, whose unique solution is given by Theorem~\ref{thm:rg_optimum}.
\end{proof}

\textbf{Pixel-wise improvement strength.} Eq.~\eqref{eq:masked_rg_optimum} shows that the violation-localized estimator concentrates its update strength on the failure-relevant coordinates: it retains the full $2\alpha/\beta$ pull along the reward-induced direction $\Delta$ on $\mathbf{M}$ and zeros it elsewhere, in contrast to the coordinate-uniform pull of vanilla negative-aware finetuning~\cite{diffusionnft}. The variance grounds for why this localization is desirable on high-dimensional video are deferred to the branch decomposition of \S\ref{app:nft_noise}.

\textbf{Step length.} For the unmasked reference estimator, Theorem~\ref{thm:rg_optimum} pulls $v_\theta$ toward $v^{\text{old}}+(2\alpha(x_t,t)/\beta)\,\Delta(x_t,t)$ at every coordinate\footnote{The prefactor $2\alpha/\beta$ combines the reinforcement-guidance strength $2/\beta$ with the posterior-positive weight $\alpha(x_t,t)$: at regions of the latent space where the current policy has high positive mass ($\alpha\to 1$) the model takes a full $(2/\beta)\Delta$ step, and where positive mass is low ($\alpha\to 0$) it barely moves, consistent with the posterior weighting. In the limit $\beta\to 0$ one recovers vanilla diffusion supervised fitting with an infinitely amplified $\alpha\Delta$ step, which is exactly the regime that benefits most from variance reduction.}; \methodname{}'s violation-localized estimator (Theorem~\ref{thm:masked_rg_optimum}) restricts this pull to the active coordinates of $\mathbf{M}$ and leaves off-mask coordinates unconstrained. The analysis below studies how the corrective loss further shapes this estimator and how masking affects its gradient variance.

\subsection{Branch-level decomposition of off-mask noise}
\label{app:nft_noise}

Theorem~\ref{thm:masked_rg_optimum} gates the per-coordinate NFT step to zero outside $\mathbf{M}$. We give the intuition for why this is desirable on high-dimensional video by tracing the unmasked NFT residual back to its two reward-conditional branches, then state the off-mask reward-locality property that makes the masked estimator unbiased.

\textbf{Positive branch ($r_i=1$).} This branch reduces to a rejection-style flow-matching update~\cite{lee2023aligning,diffusionnft} that regresses $v_\theta^{+}$ toward the rollout's own velocity $v^{(i)}=(x_t-x_0^{(i)})/t$. Even on a successful rollout, $v^{(i)}$ matches the population positive velocity $v^{+}$ only on coordinates where the reward signal actually distinguishes $\pi^{+}$ from $\pi^{\text{old}}$; on task-irrelevant coordinates the rollout's pixel-level content is one stochastic draw whose value is dominated by background, lighting, and rendering variation. The rejection-FT update injects this rollout-specific irrelevant variation into $v_\theta$ on every off-mask pixel.

\textbf{Negative branch ($r_i=0$).} This branch trains $v_\theta^{-}=(1+\beta)v^{\text{old}}-\beta v_\theta$ toward $v^{(i)}$, equivalently pulling $v_\theta$ toward the reflection target $v_f^{(i)} = v^{\text{old}}+\frac{1}{\beta}\bigl(v^{\text{old}}-v^{(i)}\bigr)$ centered on $v^{\text{old}}$. NFT's design relies on the population identity $v^{\text{old}}=\alpha v^{+}+(1-\alpha)v^{-}$ (Eq.~\eqref{eq:vold_split_theory}): on coordinates where $\pi^{+}=\pi^{-}$ (off-mask under reward locality), $v^{+}$ and $v^{-}$ both collapse to $v^{\text{old}}$, so the conditional expectation of the reflection target equals $v^{\text{old}}$ exactly---in expectation the reflection trick correctly leaves task-irrelevant content unchanged. The empirical reflection from a single rollout, however, retains the full conditional variance of $v^{(i)}$ on every coordinate; off-mask, that variance is the inherent dispersion of $\pi^{\text{old}}$-conditional pixel content (background, lighting, rendering), and the negative-branch update trains $v_\theta$ on this noise rather than on the intended reflection axis.

\textbf{Net effect.} Both branches therefore inject coordinate-wise residual noise on coordinates where the population improvement direction $\Delta=v^{+}-v^{\text{old}}$ vanishes. The formal off-mask reward-locality statement is: with $\mathbf{P}=\operatorname{Diag}(\mathbf{M})$,
\begin{equation}
  (I-\mathbf{P})\,v^{+}(x_t,t) \;=\; (I-\mathbf{P})\,v^{-}(x_t,t) \;=\; (I-\mathbf{P})\,v^{\text{old}}(x_t,t),
  \label{eq:mask_reward_relevance_corrected}
\end{equation}
which is the intended behavior of the monitor-derived mask of \S\ref{sec:reward}: $\mathbf{P}$ is the union of every per-rollout violation pattern in the group, so coordinates outside it carry no clause-level disagreement on any rollout. By Theorem~\ref{thm:masked_rg_optimum} the masked update is supported entirely on $\mathbf{M}$, so under Eq.~\eqref{eq:mask_reward_relevance_corrected} the masked estimator removes both branches' off-mask sample noise without biasing the population update; without the mask, the same noise is treated as an improvement direction on every pixel and accumulates across gradient steps.

\textbf{Off-mask second-moment decomposition.} The branch-level intuition above admits a closed-form quantification at the population level. Set $\bar{\mathbf{P}}:=I-\mathbf{P}$ and probe the residual at $v_\theta=v^{\text{old}}$, so the active-coordinate $\Delta$-correction is realized on $\mathbf{M}$ and the off-mask residual is read off the data side alone. Treating the reflection axis $v^{\text{old}}$ as the population conditional velocity (deferring the EMA plug-in version $\vold$ to \S\ref{app:variance}), the conditional second moments of the two branches' off-mask residuals factor as
\begin{align}
  \mathbb{E}\!\left[\,r_i\,\big\|\bar{\mathbf{P}}\big(v_\theta^{+}-v^{(i)}\big)\big\|^{2}\,\Big|\,x_t,t\right]
    &\;=\; \alpha(x_t,t)\,\mathrm{tr}\,\mathrm{Cov}_{\pi^{+}}\!\big[\bar{\mathbf{P}}\,v\,\big|\,x_t,t\big],
    \label{eq:offmask_pos_var}\\
  \mathbb{E}\!\left[\,(1-r_i)\,\big\|\bar{\mathbf{P}}\big(v_\theta^{-}-v^{(i)}\big)\big\|^{2}\,\Big|\,x_t,t\right]
    &\;=\; (1-\alpha(x_t,t))\,\mathrm{tr}\,\mathrm{Cov}_{\pi^{-}}\!\big[\bar{\mathbf{P}}\,v\,\big|\,x_t,t\big],
    \label{eq:offmask_neg_var}
\end{align}
where $v=\epsilon-x_0$ and the off-mask bias terms $\bar{\mathbf{P}}(v^{\text{old}}-v^{\pm})$ vanish by reward locality (Eq.~\eqref{eq:mask_reward_relevance_corrected}). Each line is the conditional pixel dispersion of the corresponding branch's $x_0$ distribution: rollout-specific background, lighting, and rendering variation which, under reward locality, coincides off-mask with the dispersion of $\pi^{\text{old}}|(x_t,t)$ and is therefore reward-irrelevant. Theorem~\ref{thm:masked_rg_optimum} zeroes the entire $\bar{\mathbf{P}}$-row of the masked loss at the population level, so the masked estimator eliminates both contributions exactly while leaving the $\mathbf{P}$-row residual---the only one supporting $\Delta$-aligned signal---intact. The implemented loss replaces $v^{\text{old}}$ in $v_\theta^{\pm}$ by an EMA plug-in $\vold$, which adds a separate reflection-axis noise term that concentrates on the negative branch at the paper's setting $\beta=1$; this additional source is orthogonal to the masking story analyzed here and is taken up in \S\ref{app:variance} as the motivation for the corrective reflow loss of \S\ref{sec:aux}.

\subsection{Masks and local quadratic form}
\label{app:masks_quadratic}

For a rollout group under the same conditioning, let $\mathcal{N} := \{i : r_i = 0\}$ and $\mathcal{P} := \{i : r_i = 1\}$ (consistent with \S\ref{sec:aux}), and let $\xbarpos := |\mathcal{P}|^{-1}\sum_{j\in\mathcal{P}}\mathbf{x}_0^{(j)}$ denote the within-group positive mean of Eq.~\eqref{eq:positive_mean}.

The credit-aware NFT and corrective reflow branches share the same group-shared violation-trace mask:
\begin{equation}
  \mathbf{P} \;:=\; \operatorname{Diag}(\mathbf{M}),
  \label{eq:projectors_theory}
\end{equation}
which is a function of the entire group's outputs and is not indexed by $i$.

\paragraph{Marginal-prototype consistency of the positive mean.} The within-group positives $\{\mathbf{x}_0^{(j)}\}_{j\in\mathcal{P}}$ are i.i.d.\ draws from $\pi^{+}(\,\cdot\mid y)$, sampled \emph{independently} of the negative rollout $i$ given $y$. In particular, $\xbarpos\perp x_t^{(i)}\mid y$, so
\begin{equation}
  \mathbb{E}\!\left[\xbarpos\mid x_t,t\right] \;=\; \mathbb{E}\!\left[\xbarpos\mid y\right] \;=\; \mathbb{E}_{x_0\sim\pi^{+}(\cdot\mid y)}[x_0],
  \label{eq:positive_mean_unbiased}
\end{equation}
i.e.\ the corrective target is an unbiased estimator of the \emph{marginal} positive mean (a positive-prototype constant in $x_t$), not of the flow-matching conditional positive mean $\mathbb{E}_{\pi^+}[x_0\mid x_t,t]$. Conceptually, this is a deliberate choice: the corrective loss provides a $t$-uniform distributional anchor that pulls the model's $\hat x_0$ prediction toward $\pi^{+}$'s marginal content on the violation mask, while the credit-aware NFT branch supplies the $t$-dependent flow-matching pull. The two regimes coincide at high noise ($t\to 1$, where $x_t$ is uninformative about $x_0$ and conditional collapses to marginal) and complement each other at low noise. The within-step covariance of $\xbarpos$ shrinks by a factor of $|\mathcal{P}|^{-1}$ relative to any single within-group positive, which yields the variance reduction analyzed in \S\ref{app:variance}.

\paragraph{Local quadratic KL approximation.} We use the following local approximation for the KL regularizer:
\begin{equation}
  \mathcal{L}_{\text{KL}}(\theta;\theta_{\text{ref}}) \;\doteq\; \gamma_i\,\big\|\vcur-v_{\text{ref}}\big\|^{2}, \qquad \gamma_i\ge 0,
  \label{eq:kl_quadratic_theory}
\end{equation}
where $v_{\text{ref}} := v_{\theta_{\text{ref}}}$ denotes the velocity field of the frozen reference policy $\pi_{\text{ref}}$ and $\gamma_i\ge 0$ is the local stiffness coefficient; this is the standard local quadratic approximation of the KL around the current iterate. Importantly, unless $v_{\text{ref}}=v^{\text{old}}$ and the KL is isotropic in the local velocity coordinates, the KL term can move the optimum and can also rotate the update direction.

\subsection{Coordinate-wise improvement alignment}
\label{app:direction}

We first record the velocity-space interpretation of the corrective loss. The corrective reflow loss is written in $\xhat_0$-prediction space:
\[
  \xhat_0^{(i)} \;=\; x_t^{(i)} - t\,\vcur(x_t^{(i)}, t), \qquad \mathcal{L}_{\text{CR}}^{(i)} \;=\; \lambda_{\text{CR}}\,\big\|\mathbf{P}\,(\xhat_0^{(i)} - \xbarpos)\big\|^{2}.
\]
This is exactly equivalent to a velocity-space regression
\begin{equation}
  \mathcal{L}_{\text{CR}}^{(i)} \;=\; \lambda_{\text{CR}}\,t^{2}\,\big\|\mathbf{P}\big(\vcur(x_t^{(i)},t) - \widetilde{v}_i^{\star}\big)\big\|^{2}, \qquad \widetilde{v}_i^{\star} \;:=\; \frac{x_t^{(i)} - \xbarpos}{t}.
  \label{eq:correct_aux_velocity_target}
\end{equation}
Thus the corrective-reflow velocity target is $\widetilde{v}_i^{\star}=(x_t^{(i)}-\xbarpos)/t$.

\begin{lemma}[Population target of the corrective loss]
\label{lem:aux_fp}
Under Assumption~\ref{ass:population_mask}, conditioned on $(x_t,t)$, the population target of the corrective branch is the marginal positive-prototype velocity restricted to the violation mask:
\begin{equation}
  \mathbf{P}\,\mathbb{E}\!\left[\widetilde{v}_i^{\star}\,\big|\,x_t,t\right] \;=\; \frac{1}{t}\,\mathbf{P}\!\left(x_t - \mathbb{E}_{\pi^{+}(\cdot\mid y)}[x_0]\right) \;=:\; \mathbf{P}\,\bar v^{+}(x_t,t).
  \label{eq:aux_target_unbiased}
\end{equation}
The right-hand side coincides with the flow-matching positive velocity $\mathbf{P}\,v^{+}(x_t,t)$ at high noise ($t\to 1$, where the $x_t$-conditional and the $y$-conditional positive means agree); at finite $t$ the two differ by the conditional positive entropy under the flow-matching joint.
\end{lemma}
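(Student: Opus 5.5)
The identity in Eq.~\eqref{eq:aux_target_unbiased} follows from linearity of conditional expectation combined with the independence structure in Eq.~\eqref{eq:positive_mean_unbiased}. The plan is to start from the velocity-space rewriting Eq.~\eqref{eq:correct_aux_velocity_target}, where $\widetilde{v}_i^{\star}=(x_t^{(i)}-\xbarpos)/t$. Conditioning is on $(x_t,t)$ with $y$ held fixed; the conditioning tuple was suppressed in \S\ref{app:setup}. Under Assumption~\ref{ass:population_mask}, $\mathbf{P}=\operatorname{Diag}(\mathbf{M})$ is $\sigma(x_t,t)$-measurable, so it can be pulled out of the conditional expectation. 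This gives
\[
\mathbf{P}\,\mathbb{E}[\widetilde{v}_i^{\star}\mid x_t,t]=\frac{1}{t}\,\mathbf{P}\bigl(x_t-\mathbb{E}[\xbarpos\mid x_t,t]\bigr).
\]
Here $x_t$ and $1/t$ are measurable with respect to the conditioning and leave the expectation unchanged.

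The second step applies Eq.~\eqref{eq:positive_mean_unbiased}. The positives $\{\mathbf{x}_0^{(j)}\}_{j\in\mathcal{P}}$ are drawn independently of the negative rollout $i$ and of its noise $\epsilon$ given $y$, so $\xbarpos\perp x_t^{(i)}\mid y$. Therefore $\mathbb{E}[\xbarpos\mid x_t,t]=\mathbb{E}[\xbarpos\mid y]$. By linearity, and since each positive has law $\pi^{+}(\cdot\mid y)$, this equals $\mathbb{E}_{\pi^{+}(\cdot\mid y)}[x_0]$. One subtlety needs a remark: $|\mathcal{P}|$ is random. I would condition on $|\mathcal{P}|\ge1$, since the loss is only defined then, and note that, given $|\mathcal{P}|$ and the reward labels, the positives remain i.i.d.\ from $\pi^{+}$. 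The mean is therefore unbiased for every realized group size. Substituting gives the defining expression for $\mathbf{P}\bar v^{+}$.

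For the comparison with $v^{+}$, I would use Eq.~\eqref{eq:conditional_velocity_theory}:
\[
v^{+}(x_t,t)=\frac{x_t-\mathbb{E}_{\pi^{+}}[x_0\mid x_t,t]}{t},
\]
so that
\[
\mathbf{P}\,(\bar v^{+}-v^{+})=\frac{1}{t}\,\mathbf{P}\bigl(\mathbb{E}_{\pi^{+}}[x_0\mid x_t,t]-\mathbb{E}_{\pi^{+}}[x_0]\bigr).
\]
As $t\to1$, $x_t=(1-t)x_0+t\epsilon$ becomes independent of $x_0$. The posterior $\pi^{+}(x_0\mid x_t,t)$ then converges to the prior $\pi^{+}(x_0)$, so the gap vanishes. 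I would justify this via Bayes with the Gaussian kernel $\mathcal{N}(x_t;(1-t)x_0,t^2I)$, whose dependence on $x_0$ disappears at $t=1$, together with dominated convergence under a finite second moment of $\pi^{+}$. At finite $t$, I would just identify the residual gap as the deviation of the posterior mean from the prior mean, which is the informal "conditional positive" discrepancy the statement mentions.

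The main obstacle is the independence/measurability step. The empirical mask $\mathbf{M}$ and the label set $\mathcal{P}$ both depend on the whole group, including rollout $i$. As a result, $\xbarpos$ is not literally independent of $x_t^{(i)}$ once we condition on $i\in\mathcal{N}$ and on the mask. I would handle this by leaning explicitly on Assumption~\ref{ass:population_mask}, so that the mask is treated as a deterministic population projection. I would also condition on the reward labels, which by exchangeability do not break the i.i.d.\ $\pi^{+}$ structure of the positives nor their independence from the negative's forward noise.
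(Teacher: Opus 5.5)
Your proposal is correct and follows the paper's proof essentially step for step. Like the paper, you factor $\mathbf{P}$ out of $\mathbb{E}[\,\cdot\mid x_t,t]$ via Assumption~\ref{ass:population_mask}, use the conditional independence $\xbarpos\perp x_t^{(i)}\mid y$ of Eq.~\eqref{eq:positive_mean_unbiased} to replace $\mathbb{E}[\xbarpos\mid x_t,t]$ by $\mathbb{E}_{\pi^{+}(\cdot\mid y)}[x_0]$, and argue that the positive posterior mean collapses to the prior mean at $t=1$. Your extra care only tightens what the paper states informally: conditioning on the reward labels with $|\mathcal{P}|\ge 1$, a dominated-convergence argument for $t\to 1$ instead of evaluating at the boundary, and reading the finite-$t$ discrepancy as the posterior-versus-prior mean gap of Eq.~\eqref{eq:bar_delta}.
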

\begin{proof}
By Eq.~\eqref{eq:correct_aux_velocity_target} and Assumption~\ref{ass:population_mask} (which lets $\mathbf{P}=\operatorname{Diag}(\mathbf{M})$ be factored out of $\mathbb{E}[\,\cdot\mid x_t,t]$),
\[
  \mathbf{P}\,\mathbb{E}\!\left[\widetilde{v}_i^{\star}\,\big|\,x_t,t\right] \;=\; \frac{1}{t}\,\mathbf{P}\!\big(x_t - \mathbb{E}[\xbarpos\mid x_t,t]\big).
\]
The within-group positives $\{\mathbf{x}_0^{(j)}\}_{j\in\mathcal{P}}$ are i.i.d.\ from $\pi^{+}(\cdot\mid y)$, sampled independently of $x_t^{(i)}$ given $y$, so $\mathbb{E}[\xbarpos\mid x_t,t]=\mathbb{E}_{\pi^{+}(\cdot\mid y)}[x_0]$ by Eq.~\eqref{eq:positive_mean_unbiased}, which gives the equality. At $t=1$, $x_t=\epsilon$ is uninformative about $x_0$ under the flow-matching joint as well, so $\mathbb{E}_{\pi^{+}}[x_0\mid x_t,t]=\mathbb{E}_{\pi^{+}(\cdot\mid y)}[x_0]$, and $\bar v^{+}=v^{+}$ at the boundary.
\end{proof}

We can now write the local pointwise objective for a negative sample $i$. Define the marginal-prototype direction
\begin{equation}
  \bar\Delta(x_t,t) \;:=\; \bar v^{+}(x_t,t) - v^{\text{old}}(x_t,t) \;=\; \Delta(x_t,t) + \frac{1}{t}\!\left(\mathbb{E}_{\pi^{+}}[x_0\mid x_t,t]-\mathbb{E}_{\pi^{+}(\cdot\mid y)}[x_0]\right),
  \label{eq:bar_delta}
\end{equation}
so that $\bar\Delta=\Delta$ at $t=1$ and the residual $\bar\Delta-\Delta$ is the conditional-vs-marginal posterior gap of $\pi^{+}$ at $(x_t,t)$. Let
\begin{equation}
  \mu_{\text{NFT}} \;:=\; v^{\text{old}}+\tfrac{2\alpha}{\beta}\,\Delta, \qquad \mu_{\text{CR}} \;:=\; \bar v^{+} \;=\; v^{\text{old}} + \bar\Delta.
  \label{eq:branch_targets_theory}
\end{equation}
Let $a_i>0$ denote the local weight of the credit-aware NFT squared residual, and let $b_i = \lambda_{\text{CR}}\,t^{2}$ denote the local weight of the corrective-reflow squared residual. The local quadratic approximation is
\begin{equation}
  Q_i(v) \;=\; a_i\,\big\|\mathbf{P}(v-\mu_{\text{NFT}})\big\|^{2} \;+\; b_i\,\big\|\mathbf{P}(v-\mu_{\text{CR}})\big\|^{2} \;+\; \gamma_i\,\big\|v-v_{\text{ref}}\big\|^{2}.
  \label{eq:local_quadratic_core}
\end{equation}

\begin{proposition}[Active-coordinate update of \methodname{}]
\label{prop:direction}
Assume either $\gamma_i>0$, or that the support is restricted to active coordinates so that $(a_i+b_i)\mathbf{P}$ is invertible on the active subspace $\mathbf{P}$. The pointwise minimizer of Eq.~\eqref{eq:local_quadratic_core} is
\begin{equation}
  v_i^{\star} \;=\; \big((a_i+b_i)\mathbf{P} + \gamma_i I\big)^{-1}\!\left[\,a_i\mathbf{P}\big(v^{\text{old}}+\tfrac{2\alpha}{\beta}\,\Delta\big) \;+\; b_i\mathbf{P}\!\big(v^{\text{old}}+\bar\Delta\big) \;+\; \gamma_i\,v_{\text{ref}}\,\right].
  \label{eq:coordinate_optimum}
\end{equation}
If $v_{\text{ref}} = v^{\text{old}}$, then on $\mathbf{P}$
\begin{equation}
\begin{aligned}
  v_i^{\star} - v^{\text{old}} &\;=\; D_i^{\text{NFT}}\,\Delta \;+\; D_i^{\text{CR}}\,\bar\Delta, \\
  D_i^{\text{NFT}} &\;:=\; \big((a_i+b_i)\mathbf{P} + \gamma_i I\big)^{-1}\!\big(\tfrac{2\alpha\,a_i}{\beta}\big)\,\mathbf{P}, \\
  D_i^{\text{CR}} &\;:=\; \big((a_i+b_i)\mathbf{P} + \gamma_i I\big)^{-1}\,b_i\,\mathbf{P}.
\end{aligned}
\label{eq:coordinate_alignment}
\end{equation}
The NFT branch contributes a $\Delta$-aligned pull on $\mathbf{P}$; the corrective branch contributes a $\bar\Delta$-aligned marginal-prototype pull on $\mathbf{P}$. The two pulls coincide at high noise ($t\to 1$, where $\bar\Delta=\Delta$) and, at finite $t$, jointly drive a positive-side correction whose deviation from a pure $\Delta$ ray is bounded by $b_i\,\|\bar\Delta-\Delta\|$.
\end{proposition}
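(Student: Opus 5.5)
The proof is a direct minimization of the strictly convex quadratic $Q_i$ in Eq.~\eqref{eq:local_quadratic_core}, done coordinatewise because $\mathbf{P}=\operatorname{Diag}(\mathbf{M})$ is a diagonal $0/1$ projector.

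\textbf{Step 1: first-order condition.} Differentiate $Q_i$ in $v$ and use $\mathbf{P}^{\top}\mathbf{P}=\mathbf{P}^{2}=\mathbf{P}$:
\[
a_i\mathbf{P}(v-\mu_{\text{NFT}})+b_i\mathbf{P}(v-\mu_{\text{CR}})+\gamma_i(v-v_{\text{ref}})=0 .
\]
Rearranging gives
\[
\big((a_i+b_i)\mathbf{P}+\gamma_i I\big)v=a_i\mathbf{P}\mu_{\text{NFT}}+b_i\mathbf{P}\mu_{\text{CR}}+\gamma_i v_{\text{ref}} .
\]

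\textbf{Step 2: invertibility and uniqueness.} The matrix $(a_i+b_i)\mathbf{P}+\gamma_i I$ is diagonal. Its entries are $a_i+b_i+\gamma_i$ on active coordinates and $\gamma_i$ off-mask.
\begin{itemize}
\item If $\gamma_i>0$, every entry is positive. The matrix is then invertible and the Hessian is positive definite, so the stationary point is the unique minimizer.
\item In the alternative hypothesis, the problem is posed on the active subspace. There the matrix is $(a_i+b_i)I$, which is invertible since $a_i>0$ and $b_i=\lambda_{\text{CR}}t^{2}\ge 0$, and the inverse is read as the pseudo-inverse on $\operatorname{range}\mathbf{P}$. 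Off-mask coordinates are then unconstrained, consistent with Theorem~\ref{thm:masked_rg_optimum}.
\end{itemize}
Substituting $\mu_{\text{NFT}}$ and $\mu_{\text{CR}}=v^{\text{old}}+\bar\Delta$ from Eq.~\eqref{eq:branch_targets_theory} yields Eq.~\eqref{eq:coordinate_optimum}.

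\textbf{Step 3: the decomposition when $v_{\text{ref}}=v^{\text{old}}$.} Write
\[
\gamma_i v^{\text{old}}=\big((a_i+b_i)\mathbf{P}+\gamma_i I\big)v^{\text{old}}-(a_i+b_i)\mathbf{P}v^{\text{old}} .
\]
The $\mathbf{P}v^{\text{old}}$ terms cancel against those in $a_i\mathbf{P}\mu_{\text{NFT}}+b_i\mathbf{P}\mu_{\text{CR}}$. What remains is
\[
v_i^{\star}-v^{\text{old}}=\big((a_i+b_i)\mathbf{P}+\gamma_i I\big)^{-1}\Big[\tfrac{2\alpha a_i}{\beta}\mathbf{P}\Delta+b_i\mathbf{P}\bar\Delta\Big],
\]
which is $D_i^{\text{NFT}}\Delta+D_i^{\text{CR}}\bar\Delta$. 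Here $\alpha$ is a scalar at fixed $(x_t,t)$, so it commutes with the diagonal matrices.

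On active coordinates both $D$'s are nonnegative scalars:
\[
D_i^{\text{NFT}}=\frac{2\alpha a_i/\beta}{a_i+b_i+\gamma_i},\qquad D_i^{\text{CR}}=\frac{b_i}{a_i+b_i+\gamma_i}.
\]
Off-mask both vanish, which gives the $\Delta$-aligned and $\bar\Delta$-aligned pulls restricted to $\mathbf{P}$.

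\textbf{Step 4: the qualitative claims.}
\begin{itemize}
\item \emph{Agreement at high noise.} Lemma~\ref{lem:aux_fp} and Eq.~\eqref{eq:bar_delta} give $\bar\Delta=\Delta$ at $t=1$, so the two pulls coincide there.
\item \emph{Deviation bound.} Split $\bar\Delta=\Delta+(\bar\Delta-\Delta)$. Then
\[
v_i^{\star}-v^{\text{old}}=(D_i^{\text{NFT}}+D_i^{\text{CR}})\Delta+D_i^{\text{CR}}(\bar\Delta-\Delta).
\]
The first term lies on the pure $\Delta$ ray on $\mathbf{P}$. Since the diagonal entries of $D_i^{\text{CR}}$ are $b_i/(a_i+b_i+\gamma_i)$, its operator norm on $\mathbf{P}$ is $b_i/(a_i+b_i+\gamma_i)$. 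The second term therefore has norm at most $b_i\|\mathbf{P}(\bar\Delta-\Delta)\|/(a_i+b_i+\gamma_i)\le b_i\|\bar\Delta-\Delta\|/(a_i+b_i+\gamma_i)$.
\end{itemize}

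\textbf{Main obstacle.} The algebra is routine; the difficulty is the normalization of this last bound. As stated, "bounded by $b_i\|\bar\Delta-\Delta\|$" holds literally only if $a_i+b_i+\gamma_i\ge 1$. Otherwise it holds up to the factor $(a_i+b_i+\gamma_i)^{-1}$. I would make that factor explicit, or note that it is at most $1$ under the unit normalization of the weights, rather than leave it implicit.
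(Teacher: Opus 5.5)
Your proof is correct and follows the paper's own route: write the first-order condition of the diagonal quadratic, invert $(a_i+b_i)\mathbf{P}+\gamma_i I$ (positive definite when $\gamma_i>0$, otherwise only on the active subspace), then subtract $v^{\text{old}}$ and split the result along $\Delta$ and $\bar\Delta$. The paper's proof never derives the final deviation bound, so your estimate $\|D_i^{\text{CR}}(\bar\Delta-\Delta)\|\le \tfrac{b_i}{a_i+b_i+\gamma_i}\|\bar\Delta-\Delta\|$ supplies the missing step, and your caveat is correct: the stated $b_i\|\bar\Delta-\Delta\|$ holds only when $a_i+b_i+\gamma_i\ge 1$, while $\tfrac{b_i}{a_i+b_i+\gamma_i}<1$ always gives the normalization-free bound $\|\bar\Delta-\Delta\|$.
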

\begin{proof}
The objective in Eq.~\eqref{eq:local_quadratic_core} is a positive weighted sum of diagonal quadratic forms. Its first-order condition is
\[
  a_i\mathbf{P}(v-\mu_{\text{NFT}}) \;+\; b_i\mathbf{P}(v-\mu_{\text{CR}}) \;+\; \gamma_i(v-v_{\text{ref}}) \;=\; 0.
\]
Solving this linear system gives Eq.~\eqref{eq:coordinate_optimum}. Well-posedness of the inverse follows from the assumption: when $\gamma_i>0$ the matrix is strictly positive definite; otherwise, on the active subspace $\mathbf{P}$, $(a_i+b_i)\mathbf{P}$ is positive definite there. If $v_{\text{ref}}=v^{\text{old}}$, subtract $v^{\text{old}}$ from both sides, use $\mu_{\text{NFT}}-v^{\text{old}}=\tfrac{2\alpha}{\beta}\Delta$ and $\mu_{\text{CR}}-v^{\text{old}}=\bar\Delta$, and split the result along the two directions to obtain Eq.~\eqref{eq:coordinate_alignment}.
\end{proof}

\textbf{Interpretation.} \methodname{}'s active-coordinate update on $\mathbf{P}$ is a superposition of two positively oriented pulls: an NFT contribution along the reward-induced direction $\Delta$, and a corrective contribution along the marginal-prototype direction $\bar\Delta$. The violation-traced mask $\mathbf{P}$ identifies the support on which both pulls are applied; Eq.~\eqref{eq:coordinate_alignment} factors them through the diagonal weights $D_i^{\text{NFT}}$ and $D_i^{\text{CR}}$. Outside $\mathbf{P}$ the update is anchored to the prior. The NFT and corrective directions agree at high noise and differ only by the conditional-vs-marginal posterior gap of $\pi^{+}$ at finite $t$; the corrective branch therefore complements rather than competes with the NFT pull, replacing a noisy reflection-EMA step with a low-variance marginal-prototype step that preserves the macroscopic structure of $\Delta$.

\subsection{EMA-free corrective branch and gradient variance}
\label{app:variance}

We now show that \methodname{}'s corrective branch yields a strictly lower-variance \emph{parameter-gradient estimator} than the NFT branch at small noise levels, even though its raw velocity target has a $\Theta(1/t^{2})$-divergent per-step covariance. The reduction at the gradient level arises from two unconditional mechanisms: (i) the corrective branch is structurally EMA-free (no reflected $\vold$ noise floor), and (ii) the $\mathbf{x}_0$-space loss formulation (Eq.~\eqref{eq:aux}) injects a $t^{2}$ down-weighting of the gradient, which exactly cancels the $\Theta(1/t^{2})$ target divergence and leaves an $O(t^{2})$ vanishing residual. Throughout this subsection we use the convention that $\vold(x_t,t)$ denotes the EMA model's velocity prediction (a plug-in estimator of $v^{\text{old}}$), while $v^{\text{old}}(x_t,t)$ denotes the true population conditional velocity; the two coincide only in the noiseless limit.

For a negative sample, the NFT reflected target induced by $v_\theta^{-}=(1+\beta)\vold - \beta\,\vcur$ (the per-sample analogue of $v_f^{(i)}$ in \S\ref{app:nft_noise}) is
\begin{equation}
\begin{aligned}
  Z_{\text{NFT}}^{(i)} &\;:=\; \vold(x_t^{(i)},t) + \tfrac{1}{\beta}\bigl(\vold(x_t^{(i)},t) - v^{(i)}\bigr) \;=\; \tfrac{\beta+1}{\beta}\,\vold(x_t^{(i)},t) - \tfrac{1}{\beta}\,v^{(i)}, \\
  v^{(i)} &\;=\; \epsilon - \mathbf{x}_0^{(i)} \;=\; \frac{x_t^{(i)} - \mathbf{x}_0^{(i)}}{t}.
\end{aligned}
\label{eq:nft_reflection_target_corrected}
\end{equation}
The corrective target induced by the actual corrective reflow loss is
\begin{equation}
  Z_{\text{CR}}^{(i)} \;:=\; \widetilde{v}_i^{\star} \;=\; \frac{x_t^{(i)} - \xbarpos}{t}.
  \label{eq:aux_target_corrected_again}
\end{equation}
Unlike $Z_{\text{NFT}}^{(i)}$, the corrective-reflow target neither uses the EMA reflection axis $\vold$ nor depends on a single stochastic positive draw.

\paragraph{Within-step variance from a single positive.} For comparison, consider a hypothetical single-positive variant of the corrective branch in which the target is $Z_{\text{CR,single}}^{(i,j)}:=(x_t^{(i)}-\mathbf{x}_0^{(j)})/t$ for one fixed $j\in\mathcal{P}$. Conditional on $(x_t^{(i)},x_0^{(i)})$, and using $\mathbf{x}_0^{(j)}\perp x_t^{(i)}\mid y$ (Lemma~\ref{lem:aux_fp}),
\begin{equation}
  \mathrm{Cov}\!\left[\mathbf{P}\,Z_{\text{CR,single}}^{(i,j)}\,\big|\,x_t^{(i)},x_0^{(i)}\right] \;=\; \frac{1}{t^{2}}\,\mathrm{Cov}_{\pi^{+}(\cdot\mid y)}\!\big[\mathbf{P}\,x_0\big].
  \label{eq:aux_single_variance}
\end{equation}
With the within-group positive mean $\xbarpos$, by independence of the $|\mathcal{P}|$ positives,
\begin{equation}
  \mathrm{Cov}\!\left[\mathbf{P}\,Z_{\text{CR}}^{(i)}\,\big|\,x_t^{(i)},x_0^{(i)}\right] \;=\; \frac{1}{|\mathcal{P}|\,t^{2}}\,\mathrm{Cov}_{\pi^{+}(\cdot\mid y)}\!\big[\mathbf{P}\,x_0\big],
  \label{eq:aux_mean_variance}
\end{equation}
i.e.\ aggregating over $|\mathcal{P}|$ within-group positives shrinks the within-step target covariance of the corrective branch by exactly the factor $|\mathcal{P}|^{-1}$ relative to any single-positive variant, without changing its first moment (Lemma~\ref{lem:aux_fp}). We deliberately use the \emph{marginal} covariance $\mathrm{Cov}_{\pi^{+}(\cdot\mid y)}[\mathbf{P}\,x_0]$ on the right-hand side rather than a flow-matching posterior covariance $\mathrm{Cov}_{\pi^{+}}[\mathbf{P}\,x_0\mid x_t,t]$: because $\xbarpos\perp x_t^{(i)}\mid y$, conditioning on $x_t^{(i)}$ provides no additional information about the positives, and the conditional covariance collapses to the within-condition marginal, which is $t$-independent. The corrective target's per-step covariance therefore genuinely diverges as $\Theta(1/t^{2})$ at small $t$, rather than vanishing as a flow-matching posterior would suggest.

\paragraph{Across-history plug-in variance.} If we additionally analyze variability across training histories, the EMA buffer is a random plug-in estimator. Write
\begin{equation}
  \vold(x_t,t) \;=\; v^{\text{old}}(x_t,t) + \xi_{\text{EMA}}(x_t,t), \qquad \mathbb{E}[\xi_{\text{EMA}}] \;=\; 0,
  \label{eq:ema_plugin_error}
\end{equation}
and assume $\xi_{\text{EMA}}$ is independent of the fresh current rollout conditional on $(x_t,t)$. Then for any fixed projector $A$,
\begin{equation}
  \mathrm{Cov}\!\left[A\,Z_{\text{NFT}}^{(i)} \,\big|\, x_t,t\right] \;=\; \tfrac{1}{\beta^{2}}\,\mathrm{Cov}\!\left[A\,v^{(i)} \,\big|\, x_t,t\right] \;+\; \tfrac{(\beta+1)^{2}}{\beta^{2}}\,\mathrm{Cov}\!\left[A\,\xi_{\text{EMA}}(x_t,t)\right].
  \label{eq:nft_plugin_variance_corrected}
\end{equation}
The coefficients $1/\beta^{2}$ and $(\beta+1)^{2}/\beta^{2}$ are the squared coefficients of $v^{(i)}$ and $\vold$ in the reflection target Eq.~\eqref{eq:nft_reflection_target_corrected}. The corrective target $Z_{\text{CR}}^{(i)}$ contributes no EMA-plug-in term to its covariance.

\paragraph{Parameter-gradient covariance and Jacobian alignment.} Let $J(x_t,t,y) := \partial v_\theta(x_t,t,y)/\partial\theta$ denote the velocity-field Jacobian with respect to the trainable parameters, and write $K := JJ^{\top}$ for its Gram matrix; we suppress the arguments of $J$ when no confusion arises. The corresponding parameter gradients of the two branches are
\begin{equation}
  g_{\text{NFT}}^{(i)} \;=\; 2\beta^{2}\,J^{\top}\,\mathbf{P}\!\left(\vcur - Z_{\text{NFT}}^{(i)}\right),
  \label{eq:g_nft_corrected}
\end{equation}
\begin{equation}
  g_{\text{CR}}^{(i)} \;=\; 2\,\lambda_{\text{CR}}\,t^{2}\,J^{\top}\,\mathbf{P}\!\left(\vcur - Z_{\text{CR}}^{(i)}\right),
  \label{eq:g_aux_corrected}
\end{equation}
where the $2\beta^{2}$ prefactor in $g_{\text{NFT}}^{(i)}$ originates from $v_{\theta}^{-}-v^{(i)}=-\beta(\vcur-Z_{\text{NFT}}^{(i)})$, and the multiplicative $t^{2}$ in $g_{\text{CR}}^{(i)}$ is a structural consequence of writing the corrective loss in $\mathbf{x}_0$-space (Eq.~\eqref{eq:aux}): the residual identity $\hat{\mathbf{x}}_0^{(i)}-\xbarpos=-t\,(v_\theta-Z_{\text{CR}}^{(i)})$ converts the $\mathbf{x}_0$-residual into a velocity-residual, contributing a multiplicative $t$ to the per-sample gradient and hence a $t^{2}$ multiplicative factor in its covariance. Throughout we further assume the Jacobian block-alignment condition
\begin{equation}
  \mathbf{P}\,K\,(I-\mathbf{P}) \;\approx\; 0,
  \label{eq:jacobian_block_alignment}
\end{equation}
which prevents off-mask residual noise from leaking into on-mask parameters through shared $J^{\top}$. The condition is approximate but reasonable for the DiT velocity field used here, whose attention-mediated structure is locally aligned with the violation mask.

\begin{proposition}[Gradient variance reduction of the corrective branch]
\label{prop:variance}
Fix a common active projector $\mathbf{P}$. Under the across-history convention in Eq.~\eqref{eq:ema_plugin_error} and the conditional independence $\xi_{\text{EMA}}\perp\{v^{(i)},\xbarpos\}\mid x_t,t$, the corrective velocity target satisfies the marginal-covariance identity
\begin{equation}
  \mathrm{tr}\,\mathrm{Cov}\!\left[\mathbf{P}\,Z_{\text{CR}}^{(i)} \,\big|\, x_t,t\right] \;=\; \frac{1}{|\mathcal{P}|\,t^{2}}\,\mathrm{tr}\,\mathrm{Cov}_{\pi^{+}(\cdot\mid y)}\!\big[\mathbf{P}\,x_0\big],
  \label{eq:aux_total_variance}
\end{equation}
which diverges as $\Theta(1/t^{2})$ at small $t$ because $\xbarpos\!\perp\! x_t^{(i)}\mid y$ collapses the conditional onto the within-condition marginal. The corresponding parameter-gradient trace covariances of the two branches admit the exact decompositions
\begin{align}
  \mathrm{tr}\,\mathrm{Cov}\!\left[g_{\text{NFT}}^{(i)}\,\big|\,x_t,t\right] &\;=\; 4\beta^{2}\,\mathrm{tr}\!\left(J^{\top}\mathbf{P}\,\mathrm{Cov}[v^{(i)}\!\mid\! x_t,t]\,\mathbf{P}J\right) \;+\; 4\beta^{2}(\beta+1)^{2}\,\Sigma_{\xi},
  \label{eq:g_nft_cov}\\
  \mathrm{tr}\,\mathrm{Cov}\!\left[g_{\text{CR}}^{(i)}\,\big|\,x_t,t\right] &\;=\; \tfrac{4\,\lambda_{\text{CR}}^{2}\,t^{2}}{|\mathcal{P}|}\,\Sigma_{0}^{+},
  \label{eq:g_aux_cov}
\end{align}
where $\Sigma_{\xi}\!:=\!\mathrm{tr}(J^{\top}\mathbf{P}\,\mathrm{Cov}[\xi_{\text{EMA}}]\,\mathbf{P}J)\!\ge\!0$ and $\Sigma_{0}^{+}\!:=\!\mathrm{tr}(J^{\top}\mathbf{P}\,\mathrm{Cov}_{\pi^{+}(\cdot\mid y)}[\mathbf{P}\,x_0]\,\mathbf{P}J)\!\ge\!0$. Consequently the gradient-covariance gap obeys
\begin{equation}
  \mathrm{tr}\,\mathrm{Cov}[g_{\text{NFT}}^{(i)}\!\mid\! x_t,t] \;-\; \mathrm{tr}\,\mathrm{Cov}[g_{\text{CR}}^{(i)}\!\mid\! x_t,t] \;\ge\; 4\beta^{2}(\beta+1)^{2}\,\Sigma_{\xi} \;-\; \tfrac{4\,\lambda_{\text{CR}}^{2}\,t^{2}}{|\mathcal{P}|}\,\Sigma_{0}^{+},
  \label{eq:variance_comparison_corrected}
\end{equation}
which is strictly positive whenever the EMA plug-in is noisy on $\mathbf{P}$ ($\Sigma_{\xi}\!>\!0$) and $t<t^{\star}\!:=\!\tfrac{\beta(\beta+1)}{\lambda_{\text{CR}}}\sqrt{|\mathcal{P}|\,\Sigma_{\xi}/\Sigma_{0}^{+}}$. In particular, in the small-$t$ regime
\begin{equation}
  \mathrm{tr}\,\mathrm{Cov}[g_{\text{NFT}}^{(i)}\!\mid\! x_t,t] \;=\; \Theta(1), \qquad \mathrm{tr}\,\mathrm{Cov}[g_{\text{CR}}^{(i)}\!\mid\! x_t,t] \;=\; O(t^{2}),
  \label{eq:g_asymptotic}
\end{equation}
so the corrective gradient covariance vanishes as $t\!\to\! 0$ while the NFT gradient covariance retains the persistent EMA-plug-in floor $4\beta^{2}(\beta+1)^{2}\Sigma_{\xi}$.
\end{proposition}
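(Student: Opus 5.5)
The plan is to treat everything conditionally on $(x_t,t)$ (and on $y$) and compute second moments of the random parts of the two gradients. Under Assumption~\ref{ass:population_mask} we treat $\mathbf{P}$ and the Jacobian $J$ as fixed. $J$ depends only on $(x_t,t,y)$ and $\theta$; $\vcur$ is likewise fixed given $(x_t,t)$. The only randomness in $g_{\text{NFT}}^{(i)}$ is then in $Z_{\text{NFT}}^{(i)}$, through $v^{(i)}$ and $\xi_{\text{EMA}}$. The only randomness in $g_{\text{CR}}^{(i)}$ is in $Z_{\text{CR}}^{(i)}$, through $\xbarpos$, since $x_t^{(i)}$ is conditioned on. I would use the elementary identity $\mathrm{tr}\,\mathrm{Cov}[B X]=\mathrm{tr}(B\,\mathrm{Cov}[X]\,B^{\top})$ for a deterministic matrix $B$, applied with $B=J^{\top}\mathbf{P}$ times the scalar prefactors of Eqs.~\eqref{eq:g_nft_corrected}--\eqref{eq:g_aux_corrected}.

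\textbf{Step 1 (target identity Eq.~\eqref{eq:aux_total_variance}).} Write $\mathbf{P}Z_{\text{CR}}^{(i)}=\mathbf{P}x_t^{(i)}/t-\mathbf{P}\xbarpos/t$. The first term is constant given $(x_t,t)$. By the conditional independence $\xbarpos\perp x_t^{(i)}\mid y$ (Lemma~\ref{lem:aux_fp}, Eq.~\eqref{eq:positive_mean_unbiased}) and the i.i.d.\ structure of the positives, $\mathrm{Cov}[\mathbf{P}\xbarpos\mid x_t,t]=|\mathcal{P}|^{-1}\mathrm{Cov}_{\pi^{+}(\cdot\mid y)}[\mathbf{P}x_0]$; this is exactly Eq.~\eqref{eq:aux_mean_variance}. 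Taking the trace and the factor $t^{-2}$ gives the identity, and the $\Theta(1/t^2)$ divergence follows because the marginal covariance is $t$-independent. Here I would condition on $|\mathcal{P}|\ge1$ as fixed.

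\textbf{Step 2 (gradient decompositions).} For NFT, $g_{\text{NFT}}^{(i)}=2\beta^2J^{\top}\mathbf{P}\vcur-2\beta^2J^{\top}\mathbf{P}Z_{\text{NFT}}^{(i)}$. From Eq.~\eqref{eq:nft_reflection_target_corrected} and Eq.~\eqref{eq:ema_plugin_error}, the random part of $Z_{\text{NFT}}^{(i)}$ is $\tfrac{\beta+1}{\beta}\xi_{\text{EMA}}-\tfrac1\beta v^{(i)}$. By the assumed independence of $\xi_{\text{EMA}}$ and $v^{(i)}$, the covariance splits without a cross term, as in Eq.~\eqref{eq:nft_plugin_variance_corrected}. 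Multiplying by $(2\beta^2)^2$ and sandwiching with $J^{\top}\mathbf{P}$ yields the two terms of Eq.~\eqref{eq:g_nft_cov}. The stated coefficients $4\beta^2$ and $4\beta^2(\beta+1)^2$ arise as $4\beta^4\cdot\beta^{-2}$ and $4\beta^4(\beta+1)^2\beta^{-2}$; I would check the bookkeeping carefully here. For CR, the random part of $g_{\text{CR}}^{(i)}$ is $2\lambda_{\text{CR}}t^2 J^{\top}\mathbf{P}\xbarpos/t=2\lambda_{\text{CR}}tJ^{\top}\mathbf{P}\xbarpos$. Its trace covariance is $4\lambda_{\text{CR}}^2t^2|\mathcal{P}|^{-1}\Sigma_0^{+}$, using $\mathbf{P}^2=\mathbf{P}$ to absorb the inner projector in the definition of $\Sigma_0^{+}$. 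The block-alignment condition Eq.~\eqref{eq:jacobian_block_alignment} is invoked only to justify that restricting to $\mathbf{P}$-sandwiched covariances loses no off-mask leakage. In the exact computation the projector already appears, so it is not strictly needed.

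\textbf{Step 3 (gap, threshold, asymptotics).} The first NFT term is a trace of a PSD matrix, hence $\ge0$. Dropping it gives the lower bound Eq.~\eqref{eq:variance_comparison_corrected}. Strict positivity is equivalent to $\beta^2(\beta+1)^2\Sigma_\xi>\lambda_{\text{CR}}^2t^2\Sigma_0^{+}/|\mathcal{P}|$; solving for $t$ gives $t^\star$ (when $\Sigma_0^{+}=0$ the bound holds for all $t$). For Eq.~\eqref{eq:g_asymptotic}, the CR side is immediate from Eq.~\eqref{eq:g_aux_cov}. The NFT side is bounded below by the $t$-independent floor $4\beta^2(\beta+1)^2\Sigma_\xi>0$.

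The main obstacle is the upper bound showing the NFT covariance is $O(1)$ rather than divergent. I would handle it by noting $\mathrm{Cov}[v^{(i)}\mid x_t,t]=t^{-2}\mathrm{Cov}_{\pi^{\text{old}}}[x_0\mid x_t,t]$. The posterior covariance vanishes like $t^2$ as $t\to0$ (Tweedie-type expansion for smooth data), which keeps the first term bounded. Failing such a regularity assumption, I would state $\Theta(1)$ as ``bounded below by a positive constant'', which is all the comparison requires.
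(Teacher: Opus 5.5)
Your proposal is correct and follows essentially the same route as the paper's proof. The shared steps are: the sandwich identity with $J^{\top}\mathbf{P}$ and the prefactors $2\beta^{2}$ and $2\lambda_{\text{CR}}t^{2}$; the $|\mathcal{P}|^{-1}$ collapse of $\mathrm{Cov}[\xbarpos\mid x_t,t]$ via $\xbarpos\perp x_t^{(i)}\mid y$; dropping the nonnegative flow-matching term to obtain the gap and $t^{\star}$; and a $\Theta(t^{2})$ posterior covariance for the asymptotics. Your caveats add care rather than change the route: the block-alignment condition is indeed unused, the Step~3 posterior for a negative sample should be $\pi^{-}(\cdot\mid x_t,t)$ rather than $\pi^{\text{old}}$, and the pointwise $\Theta(1)$ bound and the treatment of $\Sigma_{\xi},\Sigma_0^{+}$ as $t$-independent rely on a regularity assumption the paper also leaves implicit.
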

\begin{proof}
Eq.~\eqref{eq:aux_total_variance} follows from Eq.~\eqref{eq:aux_mean_variance}: because $\xbarpos\!\perp\! x_t^{(i)}\mid y$ and the within-group positives are independent of the negative rollout's own $x_0^{(i)}$, conditioning on $(x_t,t)$ on the left collapses to the within-condition marginal on the right, and total covariance with the (assumed independent) EMA-plug-in noise leaves the identity unchanged. By Eq.~\eqref{eq:g_nft_corrected}, $\mathrm{Cov}[g_{\text{NFT}}^{(i)}\!\mid\! x_t,t]=4\beta^{4}\,J^{\top}\mathbf{P}\,\mathrm{Cov}[Z_{\text{NFT}}^{(i)}\!\mid\! x_t,t]\,\mathbf{P}J$ since $\vcur$ is $\theta$-deterministic given $(x_t,t,y)$; substituting Eq.~\eqref{eq:nft_plugin_variance_corrected} (whose right-hand side carries factors $1/\beta^{2}$ and $(\beta+1)^{2}/\beta^{2}$) and taking the trace yields Eq.~\eqref{eq:g_nft_cov}. By Eq.~\eqref{eq:g_aux_corrected} the $t^{2}$ factor in $g_{\text{CR}}^{(i)}$ contributes a multiplicative $t^{4}$ in $\mathrm{Cov}[g_{\text{CR}}^{(i)}\!\mid\! x_t,t]$, and substituting Eq.~\eqref{eq:aux_total_variance} produces $t^{4}\!\cdot\!(1/(|\mathcal{P}|t^{2}))=t^{2}/|\mathcal{P}|$, hence Eq.~\eqref{eq:g_aux_cov}. Eq.~\eqref{eq:variance_comparison_corrected} follows by dropping the manifestly non-negative flow-matching term in Eq.~\eqref{eq:g_nft_cov}; setting the right-hand side to zero and solving for $t$ gives the threshold $t^{\star}$. The asymptotic statement Eq.~\eqref{eq:g_asymptotic} follows because $\mathrm{Cov}[v^{(i)}\!\mid\! x_t,t]=t^{-2}\mathrm{Cov}[\mathbf{x}_0^{(i)}\!\mid\! x_t,t]$ with the rectified-flow posterior $\mathrm{Cov}[\mathbf{x}_0^{(i)}\!\mid\! x_t,t]=\Theta(t^{2})$ as $t\!\to\! 0$ (the posterior collapses onto $x_t$), so the first term of Eq.~\eqref{eq:g_nft_cov} is $\Theta(1)$, the EMA floor $\Theta(\Sigma_{\xi})$ persists, and Eq.~\eqref{eq:g_aux_cov} is manifestly $O(t^{2})$.
\end{proof}

\textbf{Interpretation.} Two distinct, complementary variance-reduction mechanisms compose at the gradient level. First, the corrective branch is structurally EMA-free: regressing toward a positive-side target rather than a reflected EMA plug-in eliminates the $4\beta^{2}(\beta+1)^{2}\Sigma_{\xi}$ plug-in floor that the negative branch of NFT-style estimators carries by construction (Eq.~\eqref{eq:g_nft_cov}). Second, although the corrective velocity target itself has a $\Theta(1/t^{2})$-divergent per-step covariance (Eq.~\eqref{eq:aux_total_variance})---a direct consequence of $\xbarpos\!\perp\! x_t^{(i)}\mid y$, which prevents the flow-matching posterior from collapsing the marginal positive dispersion---the $t^{2}$ weighting induced by writing the loss in $\mathbf{x}_0$-space (Eq.~\eqref{eq:aux}) injects a $t^{4}$ factor into the gradient covariance (Eq.~\eqref{eq:g_aux_cov}), turning the divergent target into an $O(t^{2})$ vanishing gradient floor. Aggregation over $|\mathcal{P}|$ within-group positives further shrinks the residual gradient covariance by exactly $|\mathcal{P}|^{-1}$. Both effects rely only on the rollout group's i.i.d.\ structure under shared conditioning $y$ and on the loss being formulated in $\mathbf{x}_0$-space; no covariance-dominance or matching-Lipschitzness condition is required.

\textbf{Takeaway.} \methodname{}'s corrective branch achieves a strict reduction in the parameter-gradient trace covariance for all sufficiently small $t<t^{\star}$, and asymptotically dominates the NFT branch as $t\!\to\!0$. Two complementary, unconditional mechanisms drive this reduction. First, dispensing with the reflected EMA plug-in eliminates the persistent $\Theta(\beta^{2}(\beta+1)^{2}\Sigma_{\xi})$ EMA floor that the NFT branch retains (Eq.~\eqref{eq:g_nft_cov}). Second, the $\mathbf{x}_0$-space loss formulation (Eq.~\eqref{eq:aux}) supplies an automatic $t^{2}$ down-weighting of the corrective gradient (Eq.~\eqref{eq:g_aux_cov}) that perfectly cancels the $\Theta(1/t^{2})$ divergence of the marginal-positive target covariance, leaving an $O(t^{2})$ residual that is further shrunk by $|\mathcal{P}|^{-1}$ via the within-group positive mean. Both reductions follow from the rollout group's i.i.d.\ structure under shared conditioning $y$, with no additional architectural or empirical assumptions.

\subsection{Discussion of assumptions}
\label{app:assumption_discussion}

\textbf{Reward-local mask assumption.} Equation~\eqref{eq:mask_reward_relevance_corrected} says that off the violation mask, positive and negative rollouts have the same conditional velocity in expectation. This is the formal version of the credit-assignment intuition behind the compositional monitor: a binary failure should be attributed to the entities and frames appearing in the violation trace, not to the entire video. The assumption is not exact at the pixel level, because rendering jitter and background changes can occur off mask. The KL anchor and the pretrained video prior are therefore still necessary to prevent irrelevant coordinates from drifting. Operationally, the $|\mathbf{M}|/|\Omega|\in[1\%,5\%]$ sparse-support observation of \S\ref{sec:intro} is a direct consequence: violation traces concentrate on the few entities and frames that actually disagreed with the predicate, not on the bulk of the latent.

\textbf{Population-level mask assumption (Assumption~\ref{ass:population_mask}).} Strictly speaking, the empirical mask $\mathbf{M}$ in Eq.~\eqref{eq:group_mask} is a deterministic function of the within-group un-noised rollouts $\{\mathbf{x}_0^{(j)}\}_{j=1}^{N}$ (through the LTL violation traces and the SAM3 atlases) and is therefore random when one conditions on $(x_t^{(i)},t)$ alone. Factoring $\mathbf{M}(\omega)$ out of $\mathbb{E}[\,\cdot\mid x_t,t]$ in the proofs of Theorem~\ref{thm:masked_rg_optimum} and Lemma~\ref{lem:aux_fp} therefore requires the population-level reading of $\mathbf{M}$ stated in Assumption~\ref{ass:population_mask}: at the population level we identify $\mathbf{M}$ with the reward-locality support that already appears in Eq.~\eqref{eq:mask_reward_relevance_corrected}, so $\mathbf{M}$ is treated as $\sigma(x_t,t)$-measurable and the empirical $\mathbf{M}$ of Eq.~\eqref{eq:group_mask} is read as a Monte Carlo estimator of that fixed support. This is a natural population counterpart of the reward-local mask assumption above: both express the same credit-assignment principle---that off-mask coordinates carry no clause-level disagreement---in their respective regimes. The assumption is unrelated to the variance-reduction story of \S\ref{app:variance}, which is fully unconditional.

\textbf{Mask validity across noise levels.} We emphasize that $\mathbf{M}$ is a mask on the \emph{coordinates of the velocity tensor}, not on the \emph{content of $x_t$}. Reward locality (Eq.~\eqref{eq:mask_reward_relevance_corrected}) is a population-level statement about the conditional velocity fields $v^{\pm},v^{\text{old}}$ at every $(x_t,t)$, independent of how informative $x_t$ is about $x_0$. In particular, at high noise where $x_t\approx\epsilon$ becomes uninformative about $x_0$, the conditional positive and negative velocities both collapse on $\bar{\mathbf{P}}$ to $v^{\text{old}}$ (off-mask reward locality), while on $\mathbf{P}$ they coincide with the marginal-prototype velocity $\bar v^{+}$ (Lemma~\ref{lem:aux_fp}). Masking therefore remains exactly the right localization across the entire $t\in[0,1]$ trajectory, and the mask need not be $t$-dependent.

\textbf{Within-group positive mean as the corrective target.} The corrective branch uses the empirical mean $\xbarpos=|\mathcal{P}|^{-1}\sum_{j\in\mathcal{P}}\mathbf{x}_0^{(j)}$ of within-group positives instead of any selected single positive. Because the within-group positives are i.i.d.\ from $\pi^{+}(\,\cdot\mid y)$ and independent of the negative's noised state $x_t^{(i)}$ given $y$, the conditional first moment of $\xbarpos$ is the marginal positive mean $\mathbb{E}_{\pi^{+}(\cdot\mid y)}[x_0]$ rather than the flow-matching conditional mean $\mathbb{E}_{\pi^{+}}[x_0\mid x_t,t]$. Lemma~\ref{lem:aux_fp} therefore states the population target as the marginal-prototype velocity $\bar v^{+}(x_t,t)=(x_t-\mathbb{E}_{\pi^{+}(\cdot\mid y)}[x_0])/t$, which agrees with the flow-matching positive velocity $v^{+}$ at high noise and supplies a $t$-uniform marginal-positive bias at finite $t$. The corrective branch should therefore be read as a distributional anchor toward $\pi^{+}$'s marginal, not as a flow-matching estimator of $v^{+}$; its complementarity with the credit-aware NFT branch (which carries the $t$-dependent $v^{+}$ pull) is the mechanism by which the joint update ends up positively oriented along $\Delta$ in expectation. Aggregation also shrinks the within-step target covariance by a factor of $|\mathcal{P}|^{-1}$ relative to any single-positive variant (Eq.~\eqref{eq:aux_total_variance}); after multiplication by the $\mathbf{x}_0$-space loss's $t^{4}$ gradient weighting, this $|\mathcal{P}|^{-1}$ factor carries through to the corrective gradient covariance Eq.~\eqref{eq:g_aux_cov}. The only residual statistical assumption is that $|\mathcal{P}|\ge 1$ within a group, which the credit-aware NFT branch already requires for the violation-trace mask $\mathbf{M}$ to be non-degenerate.

\textbf{KL term.} The KL regularizer generally moves the fixed point. Only under the local quadratic approximation of Eq.~\eqref{eq:kl_quadratic_theory}, and only when $v_{\text{ref}} = v^{\text{old}}$, does it simply shrink the coordinate-wise step without rotating the update direction. We therefore state the full weighted optimum of Proposition~\ref{prop:direction} with the KL term included, rather than claiming that KL ``does not move'' the fixed point.

\textbf{Gradient-level SNR.} By Theorem~\ref{thm:masked_rg_optimum} the credit-aware NFT loss has zero gradient on off-mask coordinates, and under reward locality (Eq.~\eqref{eq:mask_reward_relevance_corrected}) the off-mask sample noise from either branch has zero population mean, so masking is unbiased. Translating this into a gradient-level statement at the parameter level additionally requires the Jacobian block-alignment condition Eq.~\eqref{eq:jacobian_block_alignment} of \S\ref{app:variance}, which prevents off-mask residual noise from leaking into on-mask parameters through shared $J^{\top}$. For the flow-matching DiT architecture used in this work, spatial-temporal parameter sharing is weak across the mask boundary because the mask is a function of action-relevant content that the attention layers localize, so the approximation holds well in practice.

\textbf{Corrective-reflow gradient variance.} The variance-reduction claim of \S\ref{app:variance} is stated at the parameter-gradient level rather than at the velocity-target level, and is fully unconditional under the rollout group's i.i.d.\ structure. The corrective velocity target $Z_{\text{CR}}^{(i)}$ has a $\Theta(1/t^{2})$-divergent per-step covariance at small $t$ (Eq.~\eqref{eq:aux_total_variance}), because $\xbarpos\!\perp\! x_t^{(i)}\mid y$ forbids the flow-matching posterior from collapsing the marginal positive dispersion; we make no claim of variance reduction at the target level. At the gradient level, however, the $\mathbf{x}_0$-space formulation of the corrective loss (Eq.~\eqref{eq:aux}) injects a $t^{2}$ down-weighting that converts this divergent target into an $O(t^{2})$-vanishing gradient floor (Eq.~\eqref{eq:g_aux_cov}), while the NFT-branch gradient retains a persistent $\Theta(\beta^{2}(\beta+1)^{2}\Sigma_{\xi})$ EMA-plug-in floor (Eq.~\eqref{eq:g_nft_cov}). Aggregation over $|\mathcal{P}|$ within-group positives further shrinks the residual corrective-branch gradient covariance by a factor of $|\mathcal{P}|^{-1}$. Proposition~\ref{prop:variance} therefore guarantees a strict reduction $\mathrm{tr}\,\mathrm{Cov}[g_{\text{NFT}}]>\mathrm{tr}\,\mathrm{Cov}[g_{\text{CR}}]$ for all $t<t^{\star}$, and the gap is $\Theta(1)$ as $t\!\to\!0$. No covariance-dominance or matching-Lipschitzness condition is required.

\textbf{Weighted-ERM generalization.} The corrective loss admits a one-parameter family of distributional-matching variants
\begin{equation}
  \Lcr^{(w)}(\theta) \;=\; \lambda_{\text{CR}}\,\mathbb{E}_{i\in\mathcal{N},t,\epsilon}\!\left[\,\sum_{j\in\mathcal{P}} w_{ij}\,\big\|\mathbf{M}\odotm(\xhat_0^{(i)}-\mathbf{x}_0^{(j)})\big\|^{2}\,\right], \quad \sum_{j\in\mathcal{P}}w_{ij}=1,\ w_{ij}\ge 0,
  \label{eq:aux_weighted}
\end{equation}
which are gradient-equivalent (by the squared-loss identity $\sum_j w_{ij}\|x-x_j\|^2=\|x-\sum_j w_{ij}x_j\|^2+\text{const}$) to regression toward the weighted barycenter $\sum_j w_{ij}\mathbf{x}_0^{(j)}$. The uniform-weight choice $w_{ij}=|\mathcal{P}|^{-1}$ used by the main method recovers Eq.~\eqref{eq:aux}; the kernel-weight choice $w_{ij}\propto\exp(-\|\actphi(\mathbf{x}_0^{(i)})-\actphi(\mathbf{x}_0^{(j)})\|^{2}/\tau)$ with bandwidth $\tau$ recovers nearest-positive matching as $\tau\to 0$ and the uniform mean as $\tau\to\infty$. We treat the kernel-weighted variant as an ablation (App.~\ref{app:variance}) rather than the main method, since the uniform mean is the closed-form Monte Carlo estimator of the marginal positive prototype that drives Lemma~\ref{lem:aux_fp} and Proposition~\ref{prop:variance}.

\textbf{Summary.} \methodname{} is best read as a localized corrective estimator for sparse-reward video diffusion RL. Violation masking projects the reward-induced learning signal onto the failure-relevant coordinates---retaining the full positive-flow pull along the reward-induced direction $\Delta$ where the reward actually disagrees with the rollout, and exactly cancelling both branches' off-mask sample noise under reward locality (Theorem~\ref{thm:masked_rg_optimum}, Eqs.~\eqref{eq:offmask_pos_var}--\eqref{eq:offmask_neg_var}). The corrective loss replaces the reflected EMA plug-in with the within-group positive mean $\xbarpos$, a distributional anchor whose conditional first moment is the marginal positive prototype $\mathbb{E}_{\pi^{+}(\cdot\mid y)}[x_0]$ on the violation mask (Lemma~\ref{lem:aux_fp})---coinciding with the flow-matching positive velocity $v^{+}$ at high noise and supplying a $t$-uniform marginal-positive pull at finite $t$---and whose $\mathbf{x}_0$-space loss formulation yields a strictly lower-variance \emph{parameter-gradient} estimator at small $t$ via an automatic $t^{2}$ down-weighting and an additional $|\mathcal{P}|^{-1}$ within-group shrinkage (Proposition~\ref{prop:variance}). Under a local quadratic KL approximation, the active-coordinate update of \methodname{} superposes a $\Delta$-aligned NFT pull and a $\bar\Delta$-aligned corrective pull on $\mathbf{P}$, with a coordinate-dependent step length on the violation-traced support (Proposition~\ref{prop:direction}). Together, these two estimator-level interventions improve the bias--variance tradeoff of the sparse-reward update and explain the faster, more stable optimization observed empirically in \S\ref{sec:exp}.

\end{document}